\theoremstyle{plain}
\newtheorem{theorem}{Theorem}[section]
\newtheorem{proposition}[theorem]{Proposition}
\theoremstyle{definition}
\theoremstyle{remark}
\title{BiEquiFormer: Bi-Equivariant Representations for Global Point Cloud Registration}
\author{%
    Stefanos Pertigkiozoglou \thanks{Equal Contribution}\\
    University of Pennsylvania\\
    \texttt{pstefano@seas.upenn.edu}\\
    \And 
    Evangelos Chatzipantazis \footnotemark[1] \\
    University of Pennsylvania\\
    \texttt{vaghat@seas.upenn.edu}\\
    \And 
    Kostas Daniilidis \\
    University of Pennsylvania \\
    Archimedes, Athena RC\\
    \texttt{kostas@cis.upenn.edu}\\
}
\definecolor{tabfirst}{rgb}{1, 0.7, 0.7} 
\definecolor{tabsecond}{rgb}{1, 0.85, 0.7} 
\definecolor{tabthird}{rgb}{1, 1, 0.7} 
\begin{document}

\maketitle

\begin{abstract}
  The goal of this paper is to address the problem of \textit{global} point cloud registration (PCR) i.e., finding the optimal alignment between point clouds irrespective of the initial poses of the scans. This problem is notoriously challenging for classical optimization methods due to computational constraints. First, we show that state-of-the-art deep learning methods suffer from huge performance degradation when the point clouds are arbitrarily placed in space. We propose that \textit{equivariant deep learning} should be utilized for solving this task and we characterize the specific type of bi-equivariance of PCR. Then, we design BiEquiformer a novel and scalable \textit{bi-equivariant} pipeline i.e. equivariant to the independent transformations of the input point clouds. While a naive approach would process the point clouds independently we design expressive bi-equivariant layers that fuse the information from both point clouds. This allows us to extract high-quality superpoint correspondences and in turn, robust point-cloud registration. Extensive comparisons against state-of-the-art methods show that our method achieves comparable performance in the canonical setting and superior performance in the robust setting in both the 3DMatch and the challenging low-overlap 3DLoMatch dataset.
\end{abstract}

\section{Introduction}
Point Cloud Registration (PCR) is at the frontend of many robotics and vision pipelines. The goal, in the pairwise and rigid setting, is to align two partially overlapped point clouds expressed in their own coordinate system by estimating a roto-translation between them and fusing them in a common coordinate system.  It has been successfully applied in many tasks such as 3D Scene Reconstruction \cite{Blais1995},  SLAM \citep{Nuchter2006} and pose estimation \cite{Yang2013}. 

While PCR has been studied extensively over the past decades, the desiderata for real-time and robust registration of real-world applications makes the problem extremely challenging. Especially in environments with repetitive patterns such as indoor environments as well as in low-overlap settings that appear loop closure tasks \cite{Bosse2008} the requirement for distinctive point-wise features for correspondence is enhanced. A particularly challenging aspect of the problem is the robustness w.r.t. the initial poses of the point clouds. In classical optimization methods, the problem is called \textit{global} PCR and is famously intractable due to the large volume of points \cite{Yang2013}. 


Deep learning has been proven very effective in PCR in all building blocks of the registration pipeline. Powerful point cloud architectures \cite{qi2016pointnet, Thomas2019KPConv} serve both as the feature extraction for correspondence-based methods \cite{zeng20163dmatch,FCGF2019} and a way to identify distinctive features for matching \cite{Huang_2020_CVPR,lepard2021}. It has also been utilized to learn robust estimators \cite{Choy_2020_CVPR,pais19,bai2021pointdsc} or directly regress the relative transformation \citep{Wang_2019_ICCV,yaoki2019pointnetlk}. However, as we show next, the problem of \textit{global} PCR is not correctly characterized and still remains unsolved. 

In this work, we show how recent state-of-the-art registration pipelines are heavily affected by the orientations of the initial scans, especially in challenging low-overlap settings (Fig. \ref{fig:brokenall}). Subsequently, we propose BiEquiformer a detector-free attention pipeline that is bi-equivariant to the roto-translation group (Fig.\ref{fig:main}). Our main contributions can be summarized as follows:
\begin{figure*}
    \centering
    \includegraphics[width=0.85\textwidth]{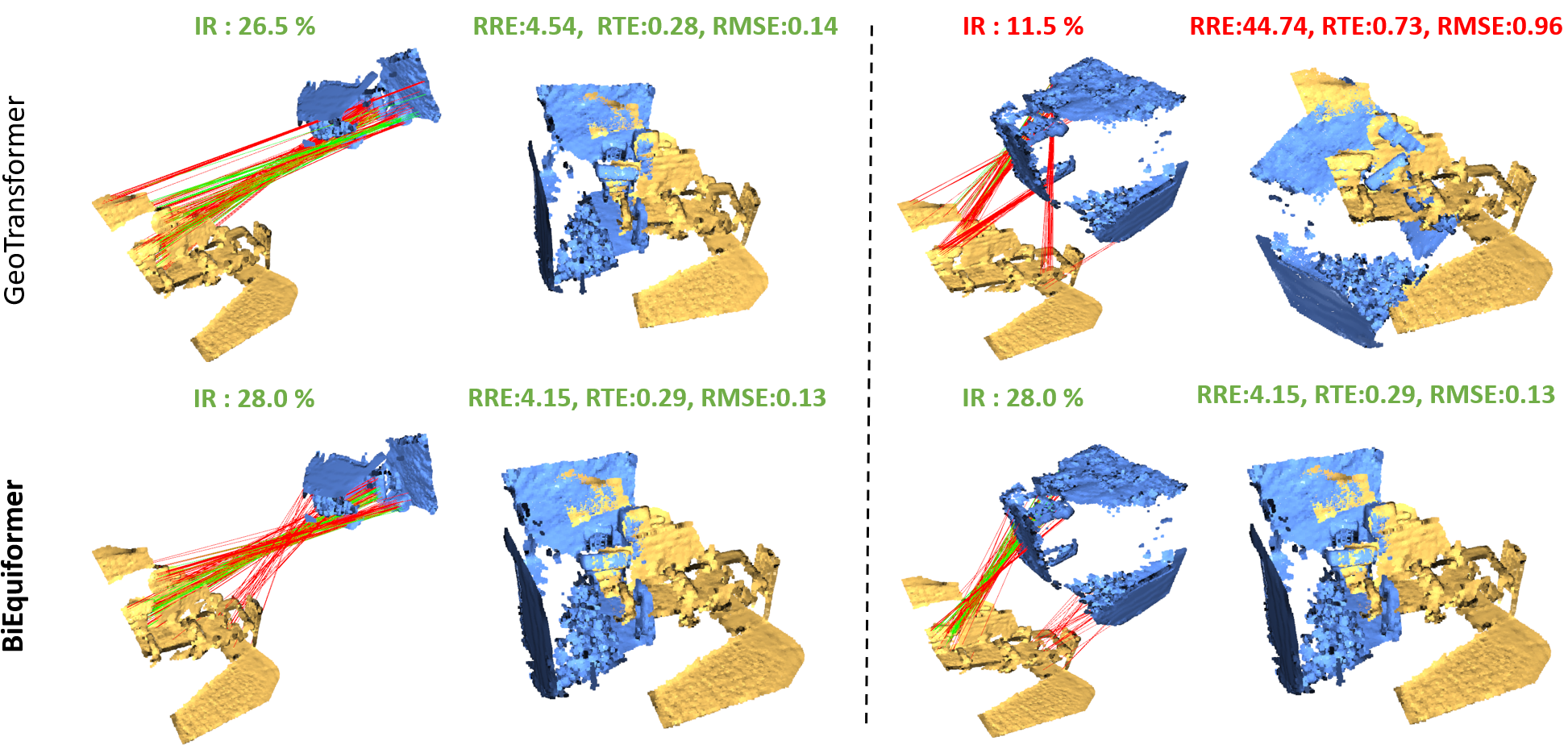}
    \caption{Inlier Ratios (IR) and Registration Metrics (RRE,RTE,RMSE) for two pairs of low-overlap scans that differ only by their relative pose. (Left) Both GeoTransformer (a state-of-the-art method)  and BiEquiformer recover the correct registration and high IR. (Right): GeoTransformer fails to find good matches (low IR) in this relative pose and predicts an incorrect registration. In contrast, BiEquiformer is designed to perform consistently irrespective of the initial point cloud poses.}
    \vskip -0.6cm
    \label{fig:brokenvisual}
\end{figure*}
\begin{compactenum}
    \item \textbf{The state of Global PCR in DL}: We investigate the robustness of state-of-the-art methods under rigid transformations of the input point clouds. In Fig. \ref{fig:brokenall} we show that in numerous popular state-of-the-art methods there is a deterioration in performance when the initial poses of the point clouds vary, exacerbated as the overlap between scans becomes smaller. A visual example can be seen in Figure \ref{fig:brokenvisual} and is discussed in detail in Section \ref{sec:Rob}.
    \item \textbf{Bi-Equivariance and PCR}: We formulate and characterize the specific bi-equivariance properties of PCR (Section \ref{sec:PrFormulation}). Then we propose novel layers that process invariant, equivariant, and different types of bi-equivariant features, which extend standard equivariant layers by fusing information between the point clouds (Section \ref{sec:method}). 
    \item \textbf{State-of-the-art in Global PCR}: Combining those layers we propose a novel, scalable equivariant pipeline for point cloud registration. Our method ensures consistent registration results, regardless of the initial configuration of the input point clouds, and achieves state-of-the-art registration accuracy in the robust setting, especially in low-overlap datasets.
\end{compactenum}

\section{Related Work}
\vskip -0.2cm
Point cloud registration (PCR) is a fundamental problem with extensive literature. Here we focus on related work on rigid geometric PCR i.e., the point clouds can be aligned with a roto-translation, and only depth is provided without any other exterior information such as color, etc.\\
\textbf{Classic Methods; ICP and Global Registration}. Over the previous decades, various methods have been proposed. Extensive surveys \citep{Pomerleau, Bellekens2015, Li2021ATR} categorize and benchmark classical algorithms or main building blocks of those e.g., the local feature extraction backbone \cite{3DLocalFD} or the robust estimators \cite{Babin2018AnalysisOR}. Stemming from the pioneering papers that introduced the Iterative Closest Point (ICP) algorithm \cite{Chen1991, Besl1992}, a number of variants have been proposed \cite{Pomerleau}. 
The non-convexity of PCR with unknown correspondences makes ICP susceptible to local optima and usually, a relatively accurate initial registration has to be provided. This initiated the problem of \textit{Global} PCR where methods treat PCR as a global optimization problem \citep{li2007,Yang2013} and solve it using Branch and Bound or more recently, graduated non-convexity \citep{Yang2021Teaser,qiao_pagor}. It is common to use such methods only as an initial estimate for registration that is subsequently refined by ICP. These methods usually run in exponential time thus facing scalability issues in scene-level scans. Our goal in this paper is to design a \textit{global} PCR method that is scalable and robust even in low-overlapping settings. \\ 
\textbf{Local 3D Feature Descriptors:} Extracting descriptors from the point clouds 
to characterize the local geometry is a common building block of most registration pipelines. 
Earlier works extract hand-designed features in the form of histograms \cite{Rusu2008Al} that encode the 3D spatial distribution of points \cite{Johnson1999}, the orientations of the neighbors \cite{SALTI2014251, Makadia2006Reg} or the differences with the neighbors in the Darboux frame \cite{Rusu2009FPFH}. More recently, deep learning architectures on point clouds \citep{dgcnn,qi2016pointnet,Qi2017PNplus,Thomas2019KPConv,FCGF2019} has been utilized for end-to-end feature extraction either by using MLPs to compactify hand-designed features \cite{Gojcic2018LEARNEDCL}, 3D convolutional networks \cite{zeng20163dmatch, FCGF2019} to encode local volumetric patches or utilizing Transformers \cite{vaswani} to encode both global and local context within and between the point clouds \cite{Huang_2021_CVPR, qin2022geometric}. While hand-designed descriptors have the advantage of being data-agnostic, they are susceptible to noise and occlusions. \\
\textbf{Correspondence-Based PCR:} Correspondence-based methods utilize the local descriptors in order to match points or surfaces between the points clouds before estimating the transformation. They are split between keypoint-based methods, that explicitly search for a small subset of distinctive features to perform the matching, or detector-free methods that perform a dense matching of points accounting for the outliers too. In the former category, in the deep learning literature the pioneering work of 3DMatch \cite{zeng20163dmatch} was followed by many works that learn to match the learned keypoints \citep{yew2018-3dfeatnet,FCGF2019,sarode2019pcrnet,Deng2018PPFNetGC,gojcic20193DSmoothNet,bai2020d3feat, wang2022you,idam}. Predator \cite{Huang_2021_CVPR} proposed that not only saliency but proximity to the overlap region should be considered in the keypoint detection and proposed a novel self-attention/cross-attention pipeline to learn that. More recently, keypoint-free deep learning methods have been introduced that perform matching in a coarse-to-fine fashion \cite{yu2021cofinet,Min_2021_ICCV,yang2022one} and have shown increased performance and robustness in low overlap settings \cite{lepard2021,qi2016pointnet}.\\
\textbf{Equivariant Registration:} As a step towards \textit{global PCR}, \textit{equivariant deep learning} can be utilized. Currently, the issue of full 3D roto-translation invariance is not always treated properly by end-to-end learned descriptors. Most of the deep learning registration pipelines are not equivariant to the point cloud poses thus requiring a great amount of data augmentations \cite{qin2022geometric} while still behaving inconsistently during inference (Fig. \ref{fig:brokenall}). In this category, PPFNet \cite{Deng2018PPFNetGC, Deng2018PPFFoldNetUL} is a keypoint-based method that introduces hand-designed rotation-invariant point features as local descriptors. YOHO \cite{wang2022you} utilizes a feature extractor equivariant to the icosahedral group while SpinNet \cite{ao2020SpinNet} uses a cylindrical convolution to extract planar equivariant features. GeoTransformer \cite{qin2022geometric} takes a step forward by encoding pose invariant features in the superpoint transformer. However, the feature backbone is not rotation-equivariant. Powerful rotation equivariant networks that operate on point clouds have been proposed \cite{chen2021equivariant,deng2021vn,pointconvformer}. They have been successfully utilized in 3D Shape Reconstruction \cite{SE3Recon2023,chen2022equivariant}, Segmentation \cite{Deng2023BananaBF}, Protein-Docking \cite{ganea2021independent}, Robotic Manipulation \cite{ryu2023equivariant,Ryu_2024_CVPR},\cite{huang2024fourier} etc. Building on that successful usage of equivariant deep learning we propose a detector-free, transformer-based registration pipeline that is bi-equivariant to the independent roto-translations of both the source and reference point clouds.  \vskip -0.3cm

\begin{figure*}
    \centering
    \vskip -0.1in
    \includegraphics[height=0.2\textheight, width=\textwidth]{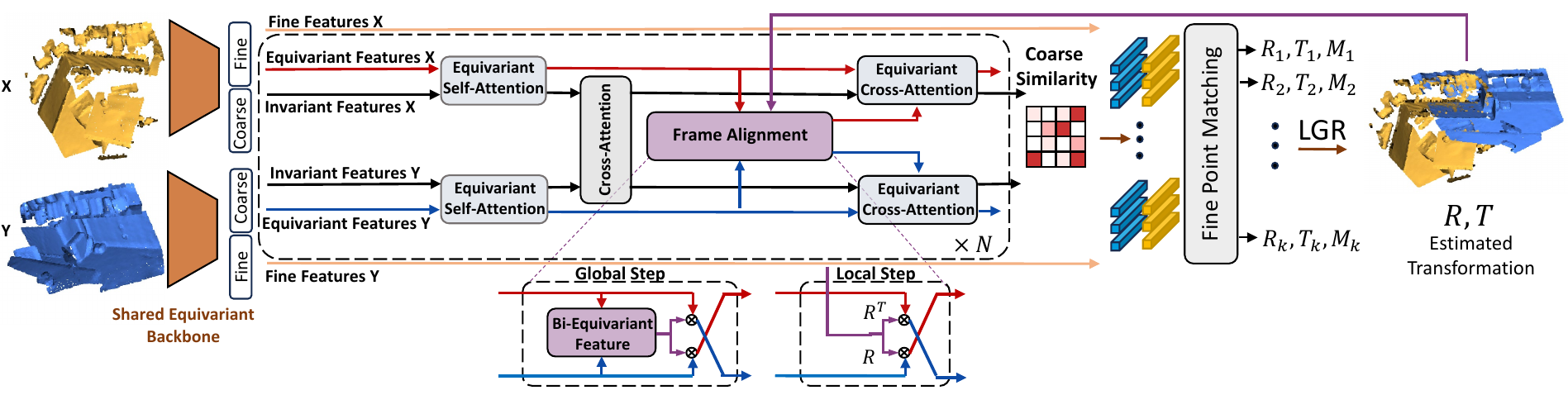}
    \caption{BiEquiFormer is an attention-based bi-equivariant pipeline for global PCR. First, equivariant intra-point self-attention and inter-point cross-attention layers update the scalar and vector features on the points. Then a bi-equivariant feature is used to align the input vectors to the same frame before applying equivariant cross-attention. 
    The output invariant coarse features are used to extract a set of candidate coarse matches which are processed by a fine point matching module to extract a candidate transformation. A final estimate is computed using a local-to-global transformation scheme. After the first transformation is estimated (Global Step) we can apply BiEquiFormer iteratively by switching the bi-equivariant frame alignment block with the current rotation estimation (Local Step).}
    \label{fig:main}
    \vskip -0.55cm
\end{figure*}

\section{Problem Formulation and Characterization of Equivariant Properties}\label{sec:PrFormulation}
\vskip -0.2cm

\par Consider two observers, the \textit{reference} and the \textit{source}, each with distinct coordinate frames $r$ and $s$ respectively, sampling points in their respective frames $X^r= \{x_i\in\mathbb{R}^3|i=1,\ldots,N\}$, $Y^s=\{y_j\in\mathbb{R}^3|j=1,\ldots,M\}$.  Let $\mathrm{SE(3)}$ denote the group of roto-translations and $\mathrm{SO(3)}$ its subgroup of rotations. The objective of point cloud registration (under the assumption of unique alignment) is to find the rigid transformation $\mathcal{T}_s^r \in \mathrm{SE(3)}$ that aligns the coordinate frame $s$ to $r$ using only the sampled points $X^r, Y^s$. Once the relative rotation and translation parameters $R_s^r \in SO(3),T_s^r \in \mathbb{R}^3$ that constitute $\mathcal{T}_s^r$, are estimated we can transform $Y^s$ to the reference coordinate frame and get $Y^r:= \mathcal{T}_s^rY^s  := R_s^r Y^s+T_s^r=\{R_s^r y+T_s^r\in \mathbb{R}^3|y\in Y^s\}$. This transformation allows the merging of the two observations through the union $X^r\cup Y^r$.

\par To solve this problem we assume that there exists an overlapping area of the surface sampled by both observers. Then, we can reduce PCR into a simultaneous correspondence and pose estimation problem. Specifically, we assume that there exists a subset $X_\mathrm{o}\subseteq X^r$ such that for every point $x_m\in X_\mathrm{o}$ there exists a corresponding $y_m\in Y^r:=R_s^rY^s+T_s^r$ such that $\lVert x_m-y_m\rVert\leq \epsilon$ for a small $\epsilon$. We refer to the points $x_i\in X_\mathrm{o}$ and their corresponding points $y_i\in Y^s$ as point matches. The goal is first to estimate these point matches. Given a set of such matching pairs $C=\{(x_i,y_i)|x_i\in X^r,y_i\in Y^s\}$, PCR estimates the relative transformation by solving the Procrustes optimization problem: ${\min}_{(R,T) \in SE(3)} \sum_{(x_i,y_i)\in C}\lVert Ry_i+T-x_i\rVert_2^2$. 

\textbf{Characterization of Equivariant Properties of PCR}:
To describe the geometric properties of the problem formally we need the notion of \textit{equivariance} first. Given a group $G$ acting on two sets $S_i, S_o$ via the (left) actions $*,\Tilde{*}:G \times S \rightarrow S$ (in our cases those sets will either be (sets of) vector spaces or a sub-group of $G$ where the action will be properly defined) a map $f: S_i \rightarrow S_o$ is \textit{equivariant} w.r.t. the group actions if for all $g \in G, s \in S_i$: $f(g*s)=g \Tilde{*} f(s)$. For clarity, we suppress $*,\Tilde{*}$ and simply write $gs$ for the group action of $G$ on $S$. The above formulation of PCR implies the following properties for the estimated transformations.\\
\textbf{$\mathrm{\mathbf{SE(3)}}$ Bi-Equivariance}: The transformation should remain consistent under (proper) rigid transformations of either $X^r$ or $Y^s$. Formally, we define a function $f: S_i \rightarrow S_o$ to be \textit{$\mathrm{SE(3)}$-bi-equivariant} (extends to any group $G$) if it is equivariant w.r.t. the \textbf{\textit{joint}} group action of the direct product group $\mathrm{SE(3)} \times \mathrm{SE(3)}$ defined as $s \mapsto g_1 * s \cdot g_2 ^{-1}$, where $*,\cdot$ are left and right group actions respectively that are jointly associative i.e. $g_1 * (s \cdot g_2 ^{-1}) = (g_1 * s) \cdot g_2 ^{-1}$ (In our case all actions are implemented using matrix multiplications which are both left and right associative; we omit the $*,\cdot$ to make notation more compact). We prove in Appendix Proposition \ref{prop:jointaction} that this \textbf{joint} action is a valid \textbf{left} action of the direct product group (whenever the actions $*, \cdot$ are well-defined). Depending on whether $s$ belongs to the domain $S_i$ or the co-domain $S_o$ of $f$ we define three cases (when the corresponding actions are well-defined) that we will use later. For all $(g_1, g_2) \in \mathrm{SE(3)} \times \mathrm{SE(3)}$:

\textbf{Output bi-equivariance}, $f:S_1 \times S_2 \rightarrow S_3$. $\forall (s_1,s_2) \in S_1 \times S_2$, $f(g_1 s_1, g_2 s_2) = g_1 f(s_1,s_2)g_2^{-1}.$\\
\textbf{Input bi-equivariance}: $f:S_1 \rightarrow S_2 \times S_3$, $f(s_1)=(s_2, s_3)$. $\forall s_1 \in S_1$, $f(g_1 s_1 g_2^{-1})= (g_1 s_2,g_2 s_3).$\\
\textbf{Input/Output bi-equivariance}: $f:S_1 \rightarrow S_2$ with $f(g_1 s_1 g_2^{-1}) = g_1 f(s_1)g_2^{-1}, \forall s_1 \in S_1$.

To analyze the properties of the PCR problem we will assume that the Procrustes optimization problem has a unique solution (a sufficient condition for that is that the set $C$ of matches includes 3 non-coplanar vectors). Given the overlap and optimality conditions discussed above, and under the additional assumption of unique ground-truth registration, PCR can be defined as a map $\bigcup_{N >0} \mathbb{R}^{3 \times N} \times \bigcup_{M >0} \mathbb{R}^{3 \times M} \rightarrow \mathrm{SE(3)}$ with $PCR(X^r, Y^s) = \mathcal{T}_s^r$. We can rigorously prove the following propositions using our definitions (all proofs in the Appendix).
\begin{proposition}\label{prop:pcr_bieq}
    PCR is output SE(3)-bi-equivariant. i.e. for all $(\mathcal{T}_1,\mathcal{T}_2) \in SE(3) \times SE(3)$: $PCR(\mathcal{T}_1X^r, \mathcal{T}_2Y^s) = \mathcal{T}_1\mathcal{T}_s^r \mathcal{T}_2^{-1}$.
\end{proposition}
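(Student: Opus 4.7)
The plan is to exploit the variational characterization of PCR given in the problem formulation: $PCR(X^r, Y^s)$ is the argmin of the Procrustes objective $\sum_{(x_i,y_i)\in C}\lVert R y_i + T - x_i\rVert^2$ over a set of correspondences $C$ extracted from $X^r, Y^s$. I would first argue that correspondences are equivariant under the $(\mathcal{T}_1, \mathcal{T}_2)$-action: if $(x_i, y_i)$ is a correspondence for the pair $(X^r, Y^s)$, meaning $\lVert x_i - \mathcal{T}_s^r y_i\rVert \le \epsilon$, then $\lVert \mathcal{T}_1 x_i - (\mathcal{T}_1\mathcal{T}_s^r\mathcal{T}_2^{-1})(\mathcal{T}_2 y_i)\rVert = \lVert \mathcal{T}_1 x_i - \mathcal{T}_1 \mathcal{T}_s^r y_i\rVert = \lVert x_i - \mathcal{T}_s^r y_i\rVert \le \epsilon$, since rigid motions preserve distances. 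So the correspondence set for $(\mathcal{T}_1 X^r, \mathcal{T}_2 Y^s)$ is $\tilde C = \{(\mathcal{T}_1 x_i, \mathcal{T}_2 y_i) : (x_i,y_i)\in C\}$.

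Next I would reparametrize the Procrustes problem for the transformed inputs. Writing any candidate $\mathcal{T} \in \mathrm{SE}(3)$ in the form $\mathcal{T} = \mathcal{T}_1 \mathcal{T}'\mathcal{T}_2^{-1}$ for a new variable $\mathcal{T}' \in \mathrm{SE}(3)$ (this is a bijective change of variables because the group action of $\mathrm{SE}(3)\times \mathrm{SE}(3)$ on $\mathrm{SE}(3)$ by $\mathcal{T}\mapsto \mathcal{T}_1\mathcal{T}\mathcal{T}_2^{-1}$ is a bijection of $\mathrm{SE}(3)$ for every fixed pair), I compute the summand:
\begin{equation*}
\mathcal{T}(\mathcal{T}_2 y_i) - \mathcal{T}_1 x_i = \mathcal{T}_1\mathcal{T}'\mathcal{T}_2^{-1}\mathcal{T}_2 y_i - \mathcal{T}_1 x_i = \mathcal{T}_1(\mathcal{T}' y_i) - \mathcal{T}_1(x_i) = R_1 \bigl(\mathcal{T}' y_i - x_i\bigr),
\end{equation*}
since the translation in $\mathcal{T}_1$ cancels in the difference and only the rotation $R_1$ survives. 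Because $R_1$ is orthogonal, its action is isometric and $\lVert \mathcal{T}(\mathcal{T}_2 y_i) - \mathcal{T}_1 x_i\rVert^2 = \lVert \mathcal{T}' y_i - x_i\rVert^2$.

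Summing over $\tilde C$ shows that the objective for the transformed inputs, viewed as a function of $\mathcal{T}'$, is identical to the original Procrustes objective for $(X^r, Y^s)$. Hence its argmin is $\mathcal{T}' = \mathcal{T}_s^r$, which via the reparametrization gives $\mathcal{T} = \mathcal{T}_1\mathcal{T}_s^r\mathcal{T}_2^{-1}$, proving $PCR(\mathcal{T}_1 X^r, \mathcal{T}_2 Y^s) = \mathcal{T}_1\mathcal{T}_s^r\mathcal{T}_2^{-1}$.

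The only subtle point, and the step I expect to be the real obstacle, is the justification that the correspondence set transforms covariantly: one has to be clear that the ``intrinsic'' pairing of points on the overlapping physical surface is a property of the scene and not of the observers' frames, so that choosing correspondences from $(\mathcal{T}_1 X^r, \mathcal{T}_2 Y^s)$ and from $(X^r, Y^s)$ yields the same underlying set up to the action. Once that is granted, the reduction above is a short algebraic calculation using the bi-invariance of the Euclidean norm under the rotational part of $\mathcal{T}_1$.
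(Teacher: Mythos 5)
Your proposal is correct and follows essentially the same route as the paper's proof: both establish that the point matches transform covariantly (since rigid motions preserve the pairwise distances defining $C$), and both then show the Procrustes objectives for the original and transformed pairs agree under the substitution $\mathcal{T} = \mathcal{T}_1\mathcal{T}'\mathcal{T}_2^{-1}$. The paper phrases this last step as two one-sided bounds on the optimal values rather than a single bijective change of variables, but the algebra is identical.
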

\vskip -0.2cm 
\begin{proposition}\label{prop:eq_flip} (Reference-Source Interchangeability)
    PCR is equivariant to the ordering of the arguments. I.e. $C_2 = \{e,\frak{f}\}$ is the group of flips with $e$ the identity and $\frak{f}$ acting as: $f(X^r,Y^s) = (Y^s, X^r)$ then:
    $PCR(\frak{f}(X^r, Y^s)) = (\mathcal{T}_{s}^r)^{-1}.$
\end{proposition}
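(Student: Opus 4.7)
The plan is to unwind the definition of PCR as the solution to the Procrustes problem and exploit the symmetry of the squared Euclidean distance. First I would note that the two ingredients of the PCR pipeline --- the putative match set $C$ and the subsequent Procrustes optimization --- are both symmetric under interchange of arguments. For the correspondences, observe that the defining condition $\lVert x_m - y_m\rVert \le \epsilon$ in the (common) reference frame is symmetric: writing $y_m^r = R_s^r y_m + T_s^r$, the bound $\lVert x_m - y_m^r\rVert \le \epsilon$ is equivalent to $\lVert y_m - (R_s^r)^{-1}(x_m - T_s^r)\rVert \le \epsilon$, so the same pairs (with the order flipped) constitute a valid match set $C' = \{(y_i,x_i) \mid (x_i,y_i) \in C\}$ for the swapped problem.

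Next I would look at the Procrustes objective. Let $(R^\star, T^\star)$ be the minimizer of $\sum_{(x_i,y_i)\in C}\lVert R y_i + T - x_i\rVert_2^2$, so that $\mathcal{T}_s^r = (R^\star, T^\star)$. For the swapped problem, we minimize $\sum_{(y_i,x_i)\in C'}\lVert R' x_i + T' - y_i\rVert_2^2$ over $(R',T') \in \mathrm{SE(3)}$. Using the isometry of $R^\star$, each summand of the original objective can be rewritten as
\begin{equation*}
\lVert R^\star y_i + T^\star - x_i\rVert_2^2 = \lVert y_i - (R^\star)^{-1}(x_i - T^\star)\rVert_2^2 = \lVert (R^\star)^{-1} x_i + (-(R^\star)^{-1} T^\star) - y_i\rVert_2^2,
\end{equation*}
which is exactly the swapped summand evaluated at $(R', T') = ((R^\star)^{-1}, -(R^\star)^{-1} T^\star)$. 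Since this reparameterization is a bijection on $\mathrm{SE(3)}$, the two optimization problems attain the same minimum value and the minimizer of the swapped problem is $((R^\star)^{-1}, -(R^\star)^{-1} T^\star)$.

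Finally I would identify this pair with the inverse element in $\mathrm{SE(3)}$: for $\mathcal{T}_s^r = (R^\star, T^\star)$ one has $(\mathcal{T}_s^r)^{-1} = ((R^\star)^{-1}, -(R^\star)^{-1} T^\star)$, so $PCR(f(X^r,Y^s)) = PCR(Y^s, X^r) = (\mathcal{T}_s^r)^{-1}$, as claimed.

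The main obstacle is not algebraic --- the SE(3) inversion identity is routine --- but rather ensuring that the correspondence-finding step is itself order-symmetric and that it selects the same pairs up to reordering when we swap inputs. I would address this by arguing as above that the $\epsilon$-closeness criterion in the Procrustes formulation is intrinsically a condition on an unordered pair of points in a common frame, so flipping the roles of reference and source produces the same match set with reversed tuple order and therefore the same optimization (up to the change of variables used above).
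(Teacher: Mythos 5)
Your proposal is correct and follows essentially the same route as the paper's proof: both establish that the match set is preserved (with tuples reversed) under the flip, then show that the substitution $\mathcal{T} \mapsto \mathcal{T}^{-1}$ is a bijection of $\mathrm{SE(3)}$ carrying the swapped Procrustes objective onto the original one, so the minimizer of the swapped problem is $(\mathcal{T}_s^r)^{-1}$. Your treatment of the $\epsilon$-closeness condition in the common frame is in fact slightly more careful than the paper's one-line "unitary operation" remark; the only item you omit that the paper includes is the closing observation that this assignment genuinely defines a $C_2$-action (i.e., $f^2=e$ is consistent with $((\mathcal{T}_s^r)^{-1})^{-1}=\mathcal{T}_s^r$), which is a trivial check.
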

\begin{proposition}(Permutation Equivariance)\label{prop:eq_ord}
    PCR is invariant to the ordering of the points. I.e. if $S_N$ is the group of permutations of $N$ points: 
    $PCR(S_N X^r, S_M Y^s) = \mathcal{T}_s^r$.
\end{proposition}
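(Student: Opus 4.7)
My strategy is to trace through each step of the PCR pipeline as defined in Section~\ref{sec:PrFormulation} and observe that every step depends on $X^r, Y^s$ only through their underlying sets of points in $\mathbb{R}^3$, never through the indexing. Since a permutation $\sigma_N \in S_N$ (resp.\ $\sigma_M \in S_M$) leaves these sets unchanged as subsets of $\mathbb{R}^3$, the optimization problem that PCR solves is literally the same, hence its minimizer $\mathcal{T}_s^r$ is unchanged.

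First I would unpack the overlap construction. The overlap subset $X_{\mathrm{o}} \subseteq X^r$ is defined by the purely geometric condition that there exists $y_m \in Y^r = R_s^r Y^s + T_s^r$ with $\lVert x_m - y_m\rVert \leq \epsilon$. This condition references points as elements of $\mathbb{R}^3$, not as entries of an indexed tuple, so $X_{\mathrm{o}}$ and the induced set of matching pairs $C = \{(x_i,y_i)\} \subseteq \mathbb{R}^3 \times \mathbb{R}^3$ are invariant under any relabeling of the inputs. In particular, $PCR(\sigma_N X^r, \sigma_M Y^s)$ and $PCR(X^r, Y^s)$ produce identical correspondence sets $C$.

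Next, I would examine the Procrustes objective
\[
\min_{(R,T) \in SE(3)} \sum_{(x_i,y_i) \in C} \lVert R y_i + T - x_i \rVert_2^2.
\]
This is an unordered sum over the set $C$, so its value at each $(R,T)$ is independent of the order in which terms are enumerated. Consequently, the feasible set and the objective function of the optimization problem for $PCR(\sigma_N X^r, \sigma_M Y^s)$ coincide pointwise with those for $PCR(X^r, Y^s)$, and the minimizers therefore coincide, yielding $\mathcal{T}_s^r$.

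The only real subtlety—hardly an obstacle—is a notational one: although the problem statement writes $X^r = \{x_i \mid i=1,\ldots,N\}$ with explicit indices, every operation the pipeline performs (overlap selection, correspondence extraction, summation in the Procrustes loss) is manifestly permutation-symmetric. Once this is made explicit, Proposition~\ref{prop:eq_ord} is an immediate consequence, and no additional argument beyond the observations above is required.
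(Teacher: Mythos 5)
Your proposal is correct and follows essentially the same route as the paper's proof: both arguments observe that the correspondence set $C$ is determined by the points as a subset of $\mathbb{R}^3$ (so relabeling leaves it unchanged) and that the Procrustes objective is an order-invariant sum over $C$, hence the minimizer is unaffected. Your write-up is somewhat more explicit about why the overlap/matching construction is index-free, but no new idea is involved.
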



\vspace{-14pt}
\section{Method}\label{sec:method}
\vskip -0.2cm
\subsection{Building Bi-equivariant feature maps}\label{sec:bilayers}
\vskip -0.2cm
While the literature is abundant with methods that build $\mathrm{SE(3)}$-equivariant representations there is a lack in the design of compact and expressive bi-equivariant feature maps i.e. feature maps that transform with the \textbf{joint action} of $\mathrm{SE(3)} \times \mathrm{SE(3)}$ as described in the previous section. This is particularly important in our problem since vanilla equivariant features do not fuse the information of both point clouds thus they create impoverished representations for matching. While the general theory from \cite{gentheory2019} can be adapted to find convolutional layers, such layers have a huge memory overhead and do not scale to scene-level scans.
Closer to our work, both \cite{ganea2021independent} and \cite{qin2022geometric} parametrize only the invariant channels when they fuse the features of the point clouds via cross-attention. However, useful vector features that can be learned on the points such as the normals of the surface cannot be represented this way.\\
We present some elementary operations on the feature maps that preserve bi-equivariance in each of the three cases above. In the next section, we utilize these operations to build expressive parametric layers for global PCR. We note that elementary equivariant operations such as the inner product or the norm of the difference, heavily used in equivariant literature, are \textbf{not} bi-equivariant. 
\begin{proposition}\label{prop:bifeats}\
    \begin{compactenum}
        \item If $f_1,f_2 \in \mathbb{R}^3$ are vector features i.e. they transform with the standard representation of $SO(3)$ then the tensor product $f_1, f_2 \mapsto f_1f_2^T$ is an SO(3) \textbf{output-bi-equivariant} map.
        \item Given a matrix $F \in \mathbb{R}^{3 \times 3}$ with distinct, positive, singular values that transforms with the joint action of $SO(3) \times SO(3)$ i.e., $F \mapsto R_1FR_2^T$ the map:
        $$F \mapsto (\{U_i\sigma(\Sigma)\}_{i=1}^4,\{V_i\sigma'(\Sigma)\}_{i=1}^4)$$ is an SO(3) \textbf{input-bi-equivariant map}, where $\{(U_i,\Sigma, V_i)\}_{i=1}^4$ are the 4 possible SVD decompositions of $F$ counting signs with $U_i,V_i \in SO(3)$ if $\det(F)>0$ and $U_i \in O(3)-SO(3),V_i \in SO(3)$ if $\det(F)<0$ and $\sigma, \sigma'$ are point-wise non-linearities on the singular values. 
        The $SO(3)$ matrices are formed as $[u_1, u_2, u_1 \times u_2]$ and the $O(3)-SO(3)$ as $[u_1, u_2, -u_1 \times u_2]$ where $u_1,u_2$ are the first two columns of $U$ (and similar for $V$).
        \item Given the same matrix $F \neq 0$ as above, the map 
        $F \mapsto \sigma(\|F\|) \frac{F}{\|F\|}$ is $\mathrm{SO(3)}$ \textbf{input-output bi-equivariant}, where $\|\cdot\|$ is a matrix norm e.g. operator, Frobenius, trace norm etc.
    \end{compactenum}
\end{proposition}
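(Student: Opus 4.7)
The plan is to verify each of the three parts by direct computation, after making the joint action of $\mathrm{SO}(3)\times\mathrm{SO}(3)$ explicit in each case. The only non-routine ingredients are how the singular value decomposition and how the standard matrix norms intertwine with this two-sided rotation action.

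For (1), write the joint action as $(f_1,f_2)\mapsto(R_1 f_1,R_2 f_2)$. Then
\[(R_1 f_1)(R_2 f_2)^T \;=\; R_1\,(f_1 f_2^T)\,R_2^T,\]
which is precisely the output-bi-equivariant action $F\mapsto R_1 F R_2^T$ on $\mathbb{R}^{3\times 3}$. For (3), the key fact is that the operator, Frobenius, and nuclear norms are unitarily bi-invariant, so $\|R_1 F R_2^T\|=\|F\|$, whence
\[\sigma(\|R_1 F R_2^T\|)\,\frac{R_1 F R_2^T}{\|R_1 F R_2^T\|} \;=\; R_1\!\left(\sigma(\|F\|)\,\frac{F}{\|F\|}\right)\!R_2^T,\]
which is exactly the input/output-bi-equivariant transformation rule.

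The substantive case is (2). The central observation is that the SVD intertwines with the joint action: if $F=U\Sigma V^T$ is an SVD of $F$, then
\[R_1 F R_2^T \;=\; (R_1 U)\,\Sigma\,(R_2 V)^T\]
is an SVD of the transformed matrix with the same singular values. Invoking uniqueness, the components transform as $U\mapsto R_1 U$, $V\mapsto R_2 V$, and $\Sigma\mapsto\Sigma$. Since $\sigma$ and $\sigma'$ act pointwise on the invariant diagonal $\Sigma$, the columns of $U\sigma(\Sigma)$ and $V\sigma'(\Sigma)$ transform under the standard representations of the left and right $\mathrm{SO}(3)$ factors respectively, giving $F\mapsto\bigl(R_1\,U\sigma(\Sigma),\;R_2\,V\sigma'(\Sigma)\bigr)$, which is the input bi-equivariance claimed.

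The main obstacle is the uniqueness caveat for the SVD in (2): it is unique only up to simultaneous sign flips of paired columns of $U$ and $V$ and up to permutations of columns when singular values coincide, and an additional convention such as $\det U=\det V=+1$ is needed to keep $U,V\in\mathrm{SO}(3)$ rather than $\mathrm{O}(3)$. I would either restrict to the generic open dense set where the singular values are distinct and nonzero, or require $\sigma$ and $\sigma'$ to respect these residual ambiguities so that $U\sigma(\Sigma)$ and $V\sigma'(\Sigma)$ are well-defined on all of $\mathbb{R}^{3\times 3}$. Once this convention is fixed, the rest of the proof reduces to the one-line verifications above.
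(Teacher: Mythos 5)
Your proof follows essentially the same route as the paper's: the same one-line computation for (1), the same unitary bi-invariance of the norm for (3), and the same SVD-intertwining argument ($R_1FR_2^T = (R_1U)\Sigma(R_2V)^T$ with $\Sigma$ invariant) for (2). Your added caveat about SVD non-uniqueness --- the sign/permutation ambiguities and the determinant convention needed to keep $U,V$ in $\mathrm{SO}(3)$ rather than $\mathrm{O}(3)$ --- is a genuine subtlety that the paper's proof silently elides, so your treatment is in fact slightly more careful than the original.
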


\textbf{Observation:} It is easy to verify that we can construct an SO(3)-equivariant map via the composition
$(\textit{\textbf{iBEq}} \circ (\circ_K \textit{\textbf{ioBEq}}^K) \circ \textit{\textbf{oBEq}})(X,Y)$ where \textit{\textbf{iBEq}}, \textit{\textbf{oBEq}}, $\textit{\textbf{ioBEq}}^K$ are SO(3) - input, output, and input-output bi-equivariant maps respectively. However, in contrast to standard equivariant layers, this composition fuses information from both inputs $X,Y$. This observation is crucial for our design.

\textbf{Architecture Overview:} We follow a coarse-to-fine approach similar to \citet{qin2022geometric}. The coarse superpoint matching stage estimates candidate pairs of matching point cloud patches (superpoints). Given these, the fine point matching stage estimates $R,T$ for the neighborhood of each candidate pair. Lastly, a local-to-global registration scheme (Appendix \ref{sec:loc2glob}), is used to evaluate each candidate transformation and select the highest-scoring one. Additionally, we propose that after the first estimated transformation (Global Step) an optional Local Refinement Step can be used, using only equivariant layers. To ensure bi-equivariance all parts of the pipeline must respect the constraint. We utilize VNN \cite{deng2021vn} as the feature extractor. In Appendix \ref{sec:featExtra} we describe an adaptation of VNN so that it processes both invariant $f_s$ and equivariant $f_v$ feature vectors and describe how we get the coarse $X_S,Y_S$ and fine points $X_D,Y_D$.

\vspace{-10pt}
\subsection{Invariant and Equivariant Attention Layers}\label{sec:selfattention}
\vskip -0.2cm
\textbf{Intra-Point Self Attention:} Assume we are given a point cloud $X$ along with its per-point equivariant and invariant features $f_s(x_i)$, $f_v(x_i)$. We propose an equivariant intra-point self-attention layer that can process both invariant and equivariant features. This layer is an extension of the invariant attention layer used in \cite{qin2022geometric} that is limited to use only invariant inputs. Specifically, we define the invariant and equivariant intra-point self-attention layers as follows:
\begin{align*}
    \alpha_s^\mathrm{intra}(x_i,f_s,f_v)=\sum_{x_j\in X}s_{ij}W_vf_s(x_j), \quad
    \alpha_v^\mathrm{intra}(x_i,f_s,f_v)=\sum_{x_j\in X}s_{ij}\mathrm{VN}_V(f_v(x_j))
\end{align*}
where $\mathrm{VN}_V$ is a learned Vector Neurons linear layer and $s_{ij} = \exp(e_{ij})/ \sum_{x_j'\in X}\exp(e_{ij'})$ where $e_{ij}$ is the attention score matrix defined as:
\begin{align*}
e_{ij} = \left(f_s(x_i)W_Q\right)\left(f_s(x_j)W_K+r_{ij}W_R\right)^T+ w_qf_v(x_i)^Tf_v(x_j)w^T_k
\end{align*}
with $r_{ij}$ being the invariant relative geometric embedding between $x_i,x_j$ introduced in \cite{qin2022geometric}, $W_Q, W_K$ being learned weight matrices and $w_q,w_k$ being learned weight vectors. In the Appendix Prop. \ref{prop:intra_inv} we prove the invariance of $\alpha_s^{\mathrm{intra}}$ and equivariance of $\alpha_v^{\mathrm{intra}}$.

\textbf{Equivariant Cross-Attention Layer:} The intra-point self-attention allows the exchange of information between points of the \textit{same} point cloud. Applying a similar mechanism for an inter-point cross-attention is not trivial when we want to use the equivariant features. That is because the two point clouds can rotate independently, and thus in order to combine these features we need a way to align them. We propose to do such an alignment by using a bi-equivariant feature extracted from a point pair that consists of a point transforming according to frame $r$ and a point transforming according to frame $s$. With this alignment, we can define an equivariant cross-attention layer that allows the exchange of information between the equivariant features of the two point clouds.

First, to define the point pair we assume a soft assignment $S_{XY}=\{s_{ij}\in [0,1]|\sum_{j=1}^{|Y|}s_{ij}=1, 0<i\leq |X|\}$ between the point clouds $X$ and $Y$ e.g. coming from the attention scores  $s_{ij}$ of a simple cross-attention layer that uses only the invariant features of the point clouds. Then for all $x_i\in X$ we compute the pairs $(x_i,y_{pi})$ where we define $y_{pi}=\sum_{j\in |Y|}s_{ij}y_j$ with features:
$$f_v(y_{pi})=\sum\nolimits_{j\in |Y|}s_{ij}f_v(y_j)\quad f_s(y_{pi})=\sum\nolimits_{j\in |Y|}s_{ij}f_s(y_j)
$$
We compute the \textbf{output bi-equivariant} function $b:\mathbb{R}^{3\times C}\times \mathbb{R}^{3\times C}\to \mathbb{R}^{3\times 3\times C}$ that takes the tensor product of the two inputs for each channel independently and pass it through an input-output bi-equivariant nonlinearity $\phi$ which in our case is:
\begin{align*}
    b(f_v(x_i),f_v(y_{pi}))=\phi(f_v(x_i)\otimes f_v(y_{pi})), \quad
    \phi(F)=\mathrm{LayerN}\left(\lVert F \rVert\right)\frac{F}{\lVert F\rVert}
\end{align*}
where $\otimes$ is the channel-wise tensor product, $\mathrm{LayerN}$ is the LayerNorm \cite{ba2016layer} and $\lVert . \rVert: \mathbb{R}^{3\times 3\times C}\to\mathbb{R}^{C} $ computes the Frobenius norm for each $3\times 3$ matrix.

Finally  to align the equivariant features $f_v(y_{pi})$ so that they rotate according to a rotation of frame $r$ we define the alignment layer $a$ as:
\begin{align}
a(f_v(x_i),f_v(y_{pi}))=b(f_v(x_i),f_v(y_{pi}))f_v(y_{pi})
\label{eq:alignment}
\end{align}

\begin{proposition}\label{prop:align}
The alignment layer is equivariant to the rotations of its first input and invariant to the rotations of its second input:
$a(R_xf_v(x_i),R_yf_v(y_{pi}))=R_xa(f_v(x_i),f_v(y_{pi}))$
\end{proposition}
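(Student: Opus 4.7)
The plan is to reduce the proposition to a composition of the bi-equivariant operations already catalogued in Proposition \ref{prop:bifeats}, and then let the orthogonality of $R_y$ absorb the spurious rotation acting on $f_v(y_{pi})$.

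First I would show that the intermediate map $b$ is SO(3) \emph{output}-bi-equivariant, i.e.\ $b(R_x f_v(x_i), R_y f_v(y_{pi})) = R_x\, b(f_v(x_i), f_v(y_{pi}))\, R_y^T$. This follows by composing two facts from Proposition \ref{prop:bifeats}. For each channel, the tensor product $f_v(x_i)\otimes f_v(y_{pi})$ is output-bi-equivariant by part (1), so under the joint action it transforms as $F \mapsto R_x F R_y^T$. Then $\phi$ is input-output bi-equivariant by part (3), since both the Frobenius norm and the normalized matrix $F/\|F\|$ respect the joint action: $\|R_x F R_y^T\|_F = \|F\|_F$ by orthogonal invariance of the Frobenius norm (hence $\mathrm{LayerN}(\|\cdot\|)$ is unchanged), and $(R_x F R_y^T)/\|R_x F R_y^T\|_F = R_x (F/\|F\|_F) R_y^T$. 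Composing these gives the stated output-bi-equivariance of $b$ channel-wise.

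Next I would plug this into the definition of $a$ and compute directly:
\begin{align*}
a(R_x f_v(x_i), R_y f_v(y_{pi})) &= b(R_x f_v(x_i), R_y f_v(y_{pi}))\, R_y f_v(y_{pi}) \\
&= \bigl(R_x\, b(f_v(x_i), f_v(y_{pi}))\, R_y^T\bigr)\, R_y f_v(y_{pi}) \\
&= R_x\, b(f_v(x_i), f_v(y_{pi}))\, f_v(y_{pi}) \\
&= R_x\, a(f_v(x_i), f_v(y_{pi})),
\end{align*}
where the key cancellation $R_y^T R_y = I$ uses only $R_y \in \mathrm{SO}(3)$. This establishes both equivariance in the first argument and invariance in the second.

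The only mild obstacle is bookkeeping around the channel dimension: $b$ outputs a tensor in $\mathbb{R}^{3\times 3 \times C}$, so I would be careful to note that the bi-equivariance of the tensor product and of $\phi$ apply independently on each channel slice, while the final matrix-vector product in \eqref{eq:alignment} is also performed per channel (each $3\times 3$ slice acting on the corresponding $3$-vector slice of $f_v(y_{pi})$). Once this is made explicit, the cancellation above carries through channel-wise and the proof is complete.
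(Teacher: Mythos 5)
Your proof is correct and follows essentially the same route as the paper's: establish that $b$ is output-bi-equivariant (tensor product bi-equivariance plus rotation-invariance of the Frobenius norm making $\phi$ input-output bi-equivariant), then cancel $R_y^T R_y = I$ in the definition of $a$. Your explicit appeal to Proposition \ref{prop:bifeats} and the channel-wise bookkeeping remark are slightly more careful than the paper's proof sketch, but the argument is the same.
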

 Given the set of pairs $(x_i,y_{pi})$ we can define the equivariant cross-attention layer where the query features are the features of points in $x_i\in X$, and the key, value features are the features of points $y_{pi}$ after they have been aligned appropriately so that they rotate according to frame $r$.

In more detail we define the score attention matrix $e_{XY}^\mathrm{pair}$ as:
\begin{align*}
    e_{XY}^\mathrm{pair}(ij)=&\left(f_s(x_i)W_Q\right)\left(f_s(y_{pj})W_K\right)^T+w_qf_v(x_i)^Ta(f_v(x_i),f_v(y_{pj}))w^T_k
\end{align*}
Then assuming that $F_X$,$F_Y$ are sets of invariant and equivariant features of points in $X$, $Y$ respectively, we can define the pair attention as:
\begin{align*}
        \alpha_s^\mathrm{pair}(x_i,F_X,F_Y)=\sum_{x_j\in X}s_{XY}^\mathrm{pair}(ij)W_vf_s(y_{pj}), 
    \alpha_v^\mathrm{pair}(x_i,F_X,F_Y)=\sum_{x_j\in X}s_{XY}^\mathrm{pair}(ij)\left(\mathrm{VN}_V(a(x_j,y_{pj}))\right)
\end{align*}
with $s_{XY}^\mathrm{pair}(ij)$ being the softmax of the attention scores $e_{XY}^\mathrm{pair}(ij)$. 
In Prop. \ref{prop:fini} we prove that $\alpha_s^\mathrm{pair}$ is invariant to the roto-translation of both point clouds $X,Y$. $\alpha_v^\mathrm{pair}$ is equivariant to the roto-translation of $X$ and invariant to the roto-translation of $Y$. 

Here we have defined the attention layer for pairs of the form $(x_i,y_{pi})$, but similarly we can define the symmetric layer for pairs of the form $(y_i,x_{pi})$ that is equivariant to the rotation of point cloud $Y$. The use of pairs allows us to do a single alignment of the equivariant features for the keys and values in the cross attention by computing $a(x_i,y_{pi})$ once. Then after the alignment we can perform a regular cross-attention which is much more computationally and memory efficient than having to compute $|X|*|Y|$ alignments for all possible combinations of points for $|X|$ and $|Y|$.
\vspace{-10pt}
\subsection{Coarse point correspondence}\label{sec:CoarseCorr}
\vskip -0.2cm
For the estimation of the superpoint matches we utilize the equivariant backbone presented in Section \ref{sec:featExtra}, followed by a coarse correspondence model that iteratively applies intra-point self-attention between the points of the same point cloud and inter-point cross attention between the points of both point clouds. For the intra-point self-attention we are using in parallel the invariant and equivariant self-attention layers presented above. For the inter-point cross attention we used a composition of a simple cross-attention layer only between the invariant features of the two point clouds, followed by an equivariant cross-attention layer defined above.

\par The input of the coarse correspondence transformer is the per-point \emph{invariant} and \emph{equivariant} features extracted by the backbone for the superpoints $X_S$, $Y_S$. Its outputs are invariant per superpoint features for both point clouds, namely $f_{cx},f_{cy}$ for all $x\in X_S$, $y\in Y_S$. The extracted features are then used to compute a correlation matrix $S$ between all the superpoints of the reference and the source. After extracting the correlation between the coarse superpoints we select the top-K entries of $S$ as the candidate superpoint matches. These matches will be invariant to roto-translations of the input point-cloud, since the features used for their computation are invariant.  
\subsection{Fine point matching}\label{sec:fineMatching}
\vskip -0.2cm
Given a candidate pair of matched superpoints $(x_{k(n)},y_{k(n)})$ we perform fine point matching on their corresponding local neighborhoods $\mathcal{N}_{x_{k(n)}}\subseteq X_D$, $\mathcal{N}_{y_{k(n)}}\subseteq Y_D$. We define the neighborhood $\mathcal{N}_{x_{k(n)}}\subseteq X_D$ as the set of all the fine points that have $x_{k(n)}$ as their closest coarse point $\mathcal{N}_{x_{k(n)}}=\left\{x\in X_D|x_{k(n)}=\underset{x_j\in X_S}{\arg\min}(\lVert x-x_j\rVert)\right\}$, and similarly for $\mathcal{N}_{y_k}$. The dense point correspondences are extracted using an optimal transport layer with a cost matrix defined as     $C_k=\left(F_{x_k}F_{y_k}^T\right)/\sqrt{d}$. Here $F_{x_k}\in R^{C\times |\mathcal{N}_{x_k}|}$, $F_{y_k}\in R^{C\times |\mathcal{N}_{x_k}|}$ are matrices with columns containing scalar features for each point of the corresponding local neighborhoods. Similar to the coarse matches, in order for the optimal transport cost and consequently the assignment of the fine point matches to be invariant to rigid transformation, the features represented as columns of $F_{x_k}$, $F_{y_k}$ should also be invariant to these transformations. \\
Since we require  $F_{x_k}$, $F_{y_k}$ to contain invariant features, we can include the equivariant vector features $f_v(p)$ of a fine point $p\in X_{(1)}$ or $p\in Y_{(1)}$, by defining the invariant feature
$U_p=W\text{vec}(f_v^Tf_v)$, 
where  $W$ is a learnable matrix that mixes the elements of $f_v^Tf_v$. From Eq. \ref{eq:EqFeat} we can easily observe how $U_p$ is invariant since the inner product $f_\mathrm{v}^Tf_\mathrm{v}$ between two equivariant features remain invariant under a roto-translation of the input point-cloud. As a result the columns of $F_{x_k}$, $F_{y_k}$ that correspond to individual point features, for points of $p$ of the neighborhoods $\mathcal{N}_{x_k}$, $\mathcal{N}_{y_k}$, are computed by concatenating $U_p$, $f_\mathrm{inv}(p)$ to create a invariant feature $f_p=[U_p^T,f_\mathrm{inv}^{(1)}(p)^T]^T$. 

\par Using the cost matrix $C_k$ of the $k^{th}$ coarse match we utilize the Sinkhorn algorithm  \citep{sinkhorn1967concerning} to compute a matrix $Z_k$, that provides a soft assignment between the fine points of the two neighborhoods. We identify the pair of points for which their corresponding entry is among the top-M entries in both their row and their columns, which we refer to as the mutual top-M set of $Z_k$. This results in a set $M_k$ of fine point matches corresponding to the candidate coarse match $(x_{k(n)},y_{k(n)})$. The final alignment transformation is computed using a local-to-global registration scheme proposed in \cite{qin2022geometric} (See Appendix \ref{sec:loc2glob})
\vspace{-5pt}
\subsection{Iterative Refinement}\label{sec:iterative}
\vskip -0.2cm
Given an initial estimation of the alignment transformation $R_0,T_0$ produced by our model, we can perform a refinement step by iteratively applying our model and using the previous estimated transform as an extra input. To incorporate this additional input, after the first estimation of $R_0,T_0$, we use this estimation to align the equivariant features before the cross attention, which replaces the alignment layer $a(.,.)$ defined in Eq.\ref{eq:alignment}. In the experiments, we present the results of our method when we perform three additional refinement steps.
\vspace{-5pt}
\section{Experiments}\label{sec:experiments}
We evaluate our method on the 3DMatch \cite{zeng20163dmatch} and the challenging 3DLoMatch \cite{Huang_2021_CVPR} datasets which contain scans of indoor scenes with varying levels of overlap. The 3DMatch dataset contains 46 scenes for training, 8 for validation, and 8 for testing. Following the protocol of \cite{Huang_2021_CVPR} we evaluate on the 3DMatch test which contains scenes with an overlap of $30\%$ and above and on the 3DLoMatch test set, which contains scenes with overlap ranging from $10\%$ to $30\%$. For the quantitative evaluation of our method we use similar metrics to previous works \cite{qin2022geometric, Huang_2021_CVPR} (see Appendix \ref{sec:evalmetr} for more details).

\subsection{Robustness Analysis to the initial pose of the point clouds}\label{sec:Rob}
\begin{figure*}[h]
    \centering
     \includegraphics[width=0.32\textwidth]{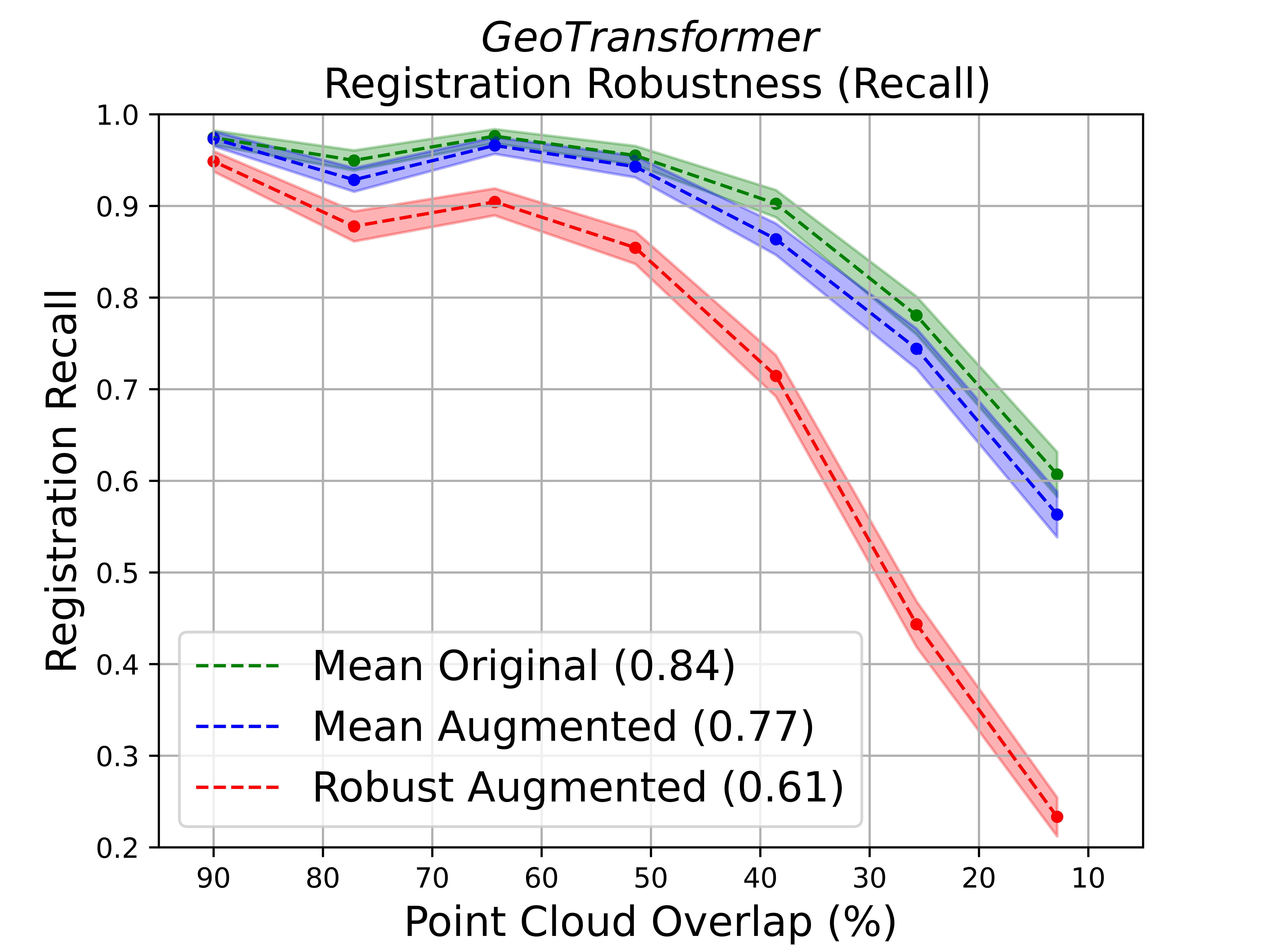}
     \includegraphics[width=0.32\textwidth]{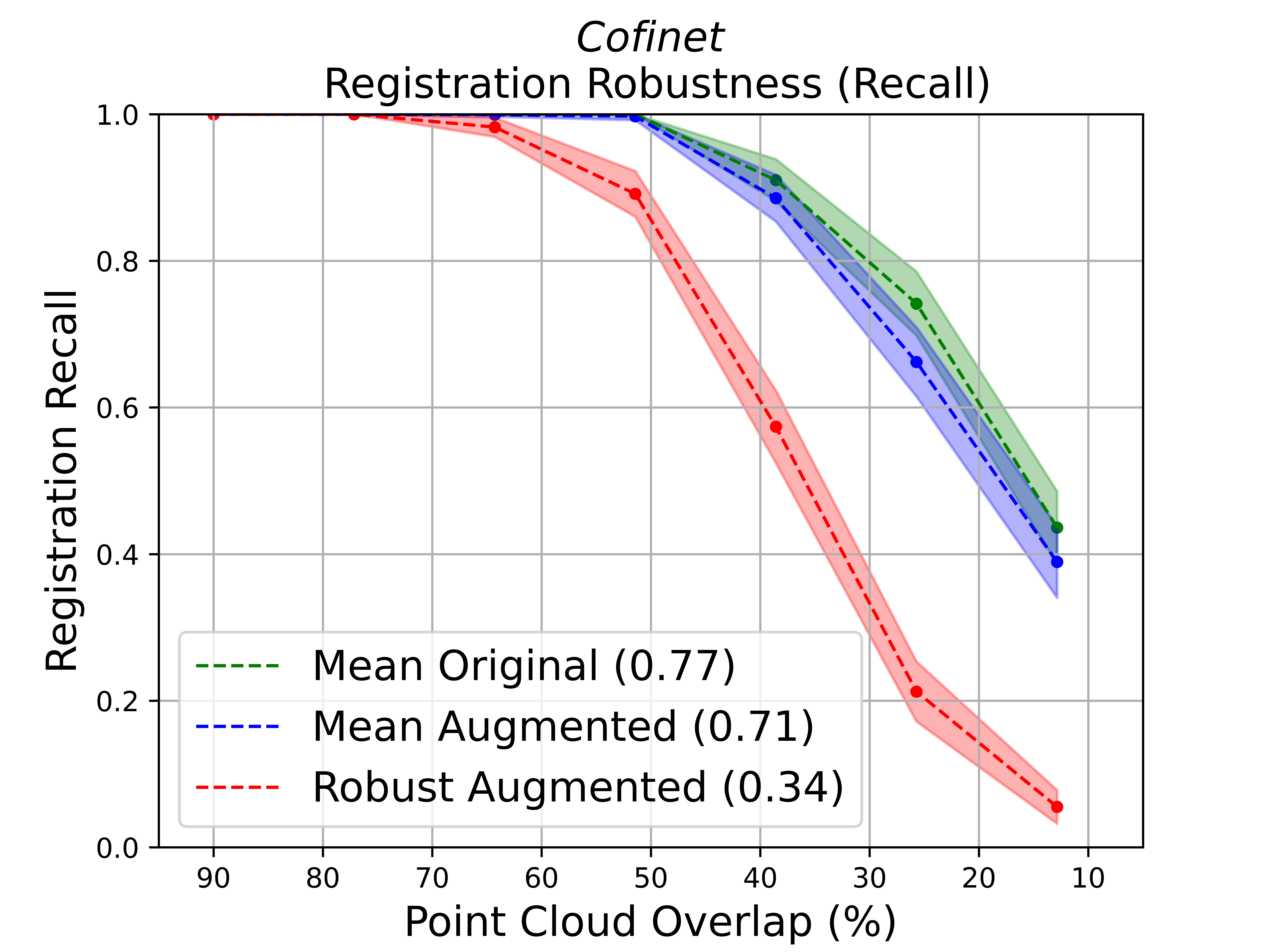}
      \includegraphics[width=0.32\textwidth]{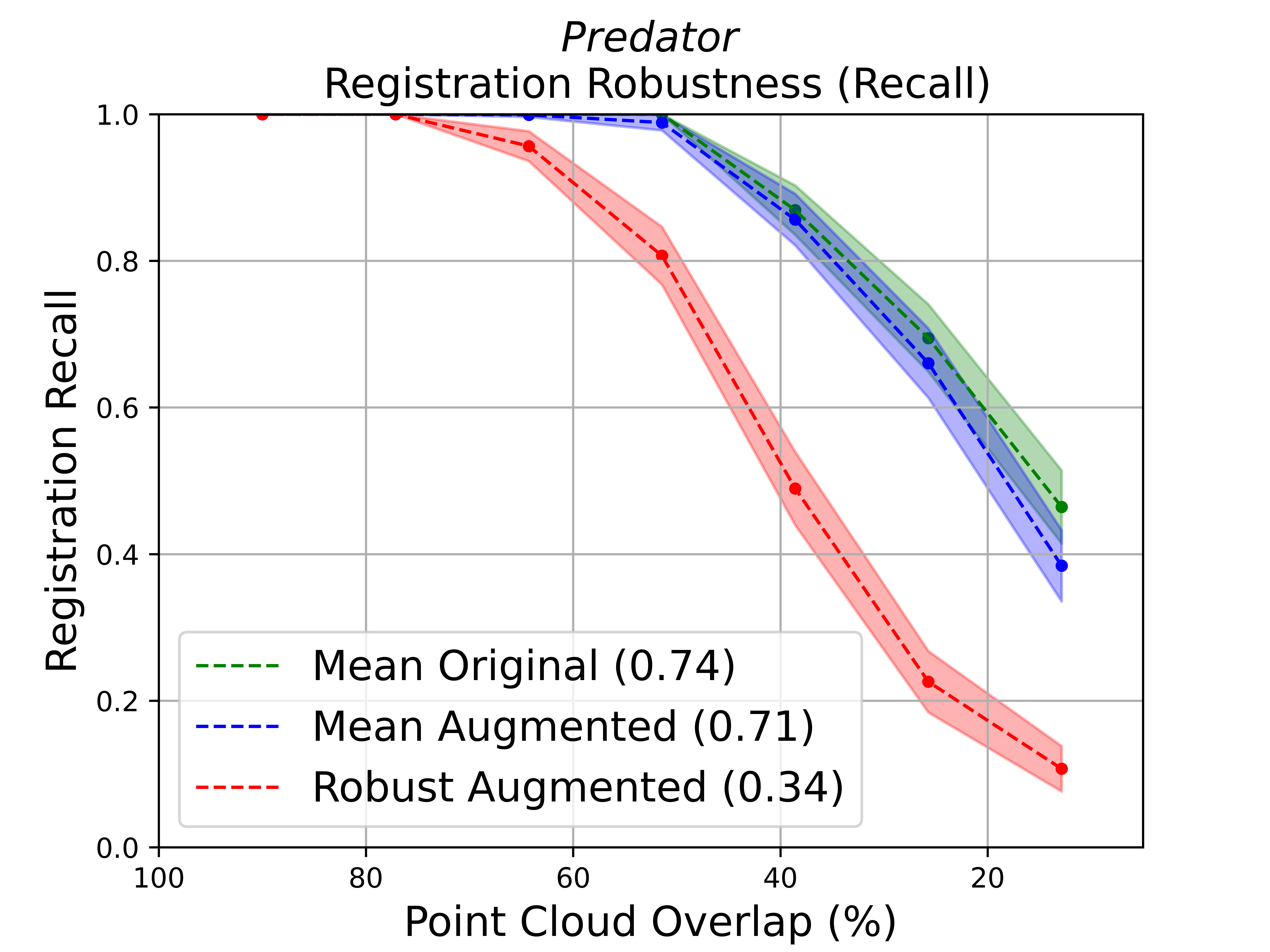}
    \includegraphics[width=0.32\textwidth]{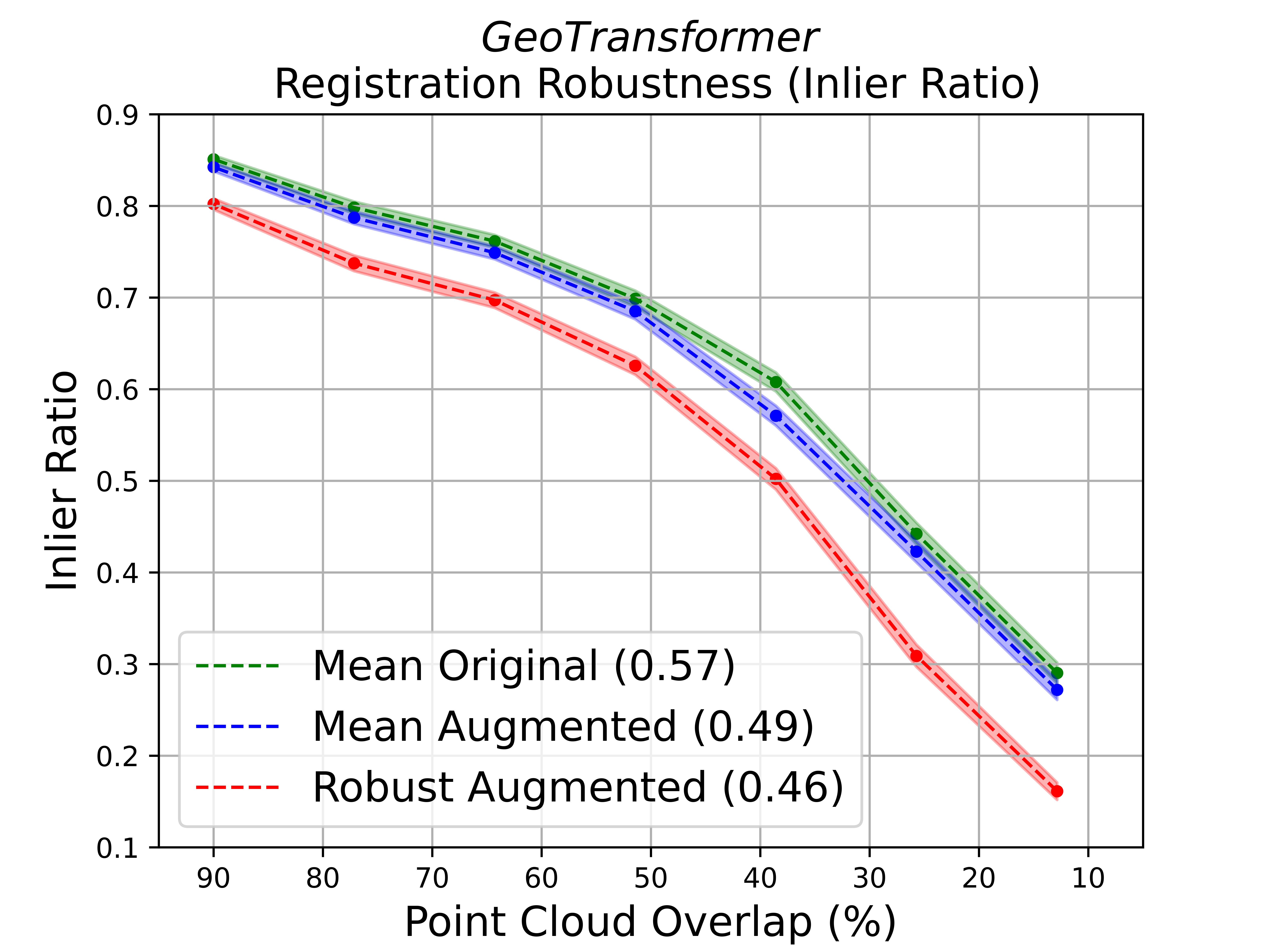}
    \includegraphics[width=0.32\textwidth]{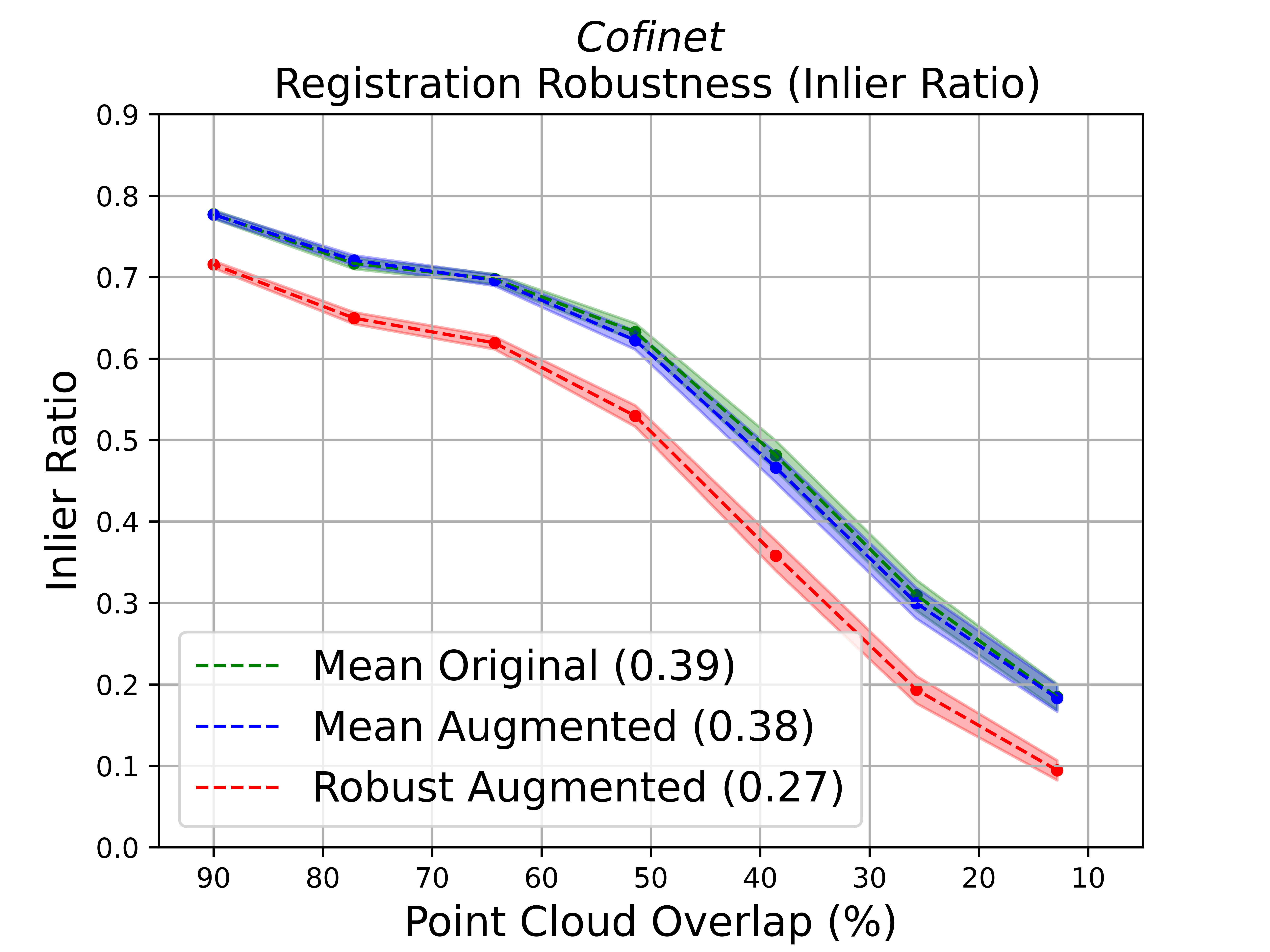}
    \includegraphics[width=0.32\textwidth]{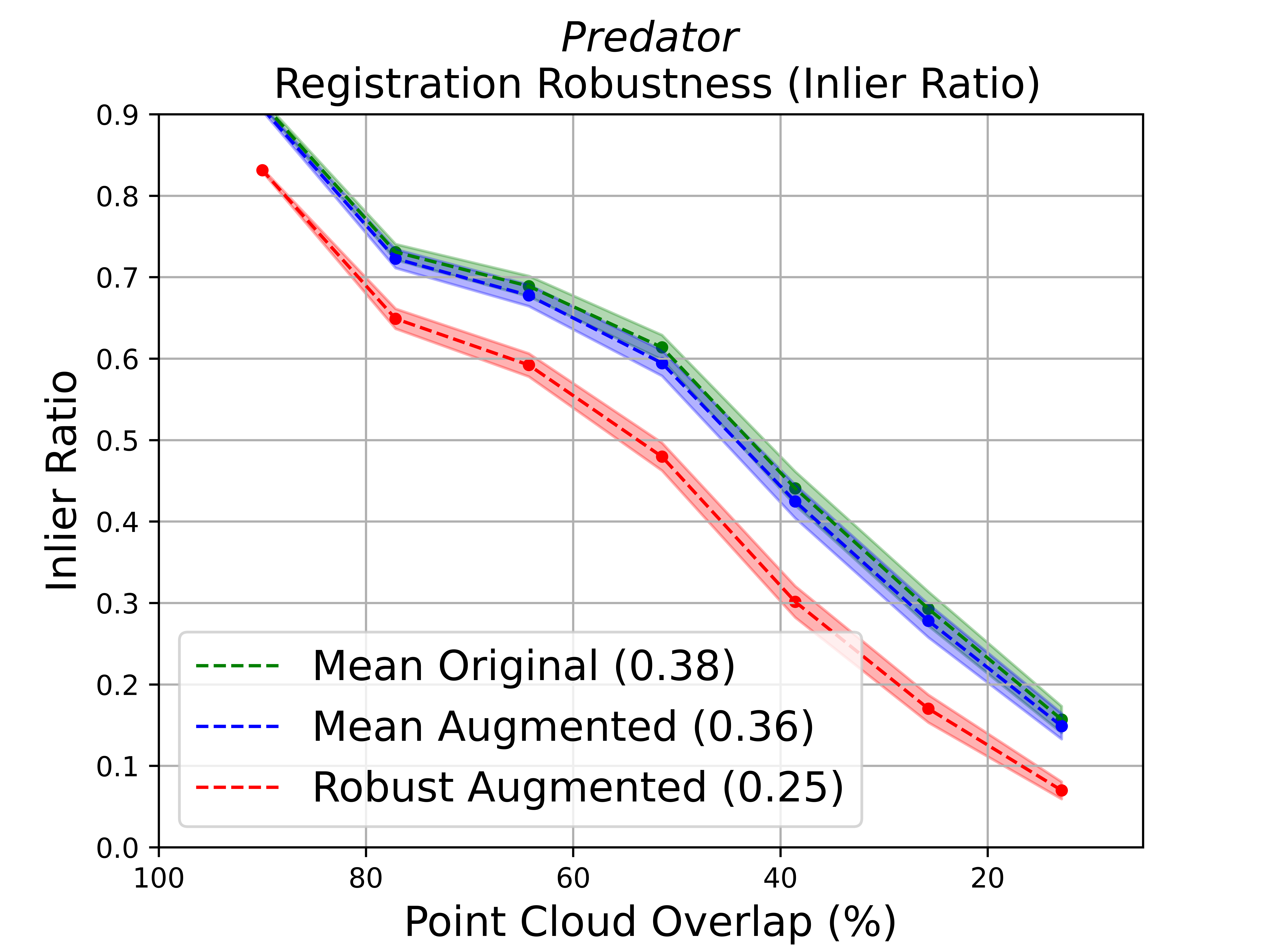}
    \caption{Registration Recall and Inlier Ratio for GeoTransformer \cite{qin2022geometric}, Cofinet \cite{yu2021cofinet} and Predator \cite{Huang_2021_CVPR} on different overlap ranges of the total 3DMatch \cite{zeng20163dmatch}. The green lines (mean original) show the mean per overlap range for the original dataset. The blue lines (mean augmented) show the mean per overlap range of an augmented dataset in which each point cloud has been uniformly roto-translated creating a total of 54 configurations per pair. The red line (robust augmented) shows the mean per overlap range of the minimum across the 54 different configurations. The total mean across all pairs in the dataset for each case is also shown in the plot.}
    \label{fig:brokenall}
\end{figure*}
We benchmark popular state-of-the-art methods \citet{qin2022geometric,yu2021cofinet,Huang_2021_CVPR} on their robustness to the initial poses of the scans. We test all methods in the total 3DMatch dataset \cite{zeng20163dmatch} by concatenating the 3DMatch and 3DLoMatch splits and test the mean performance across different overlap intervals. In Fig.\ref{fig:brokenall} we plot the Registration Recall and the Inlier Ratio in 3 different settings. First, in the green lines (mean original), we show the mean performance of the methods in each overlap interval in the original dataset. The overlap of each pair is calculated as in \cite{Huang_2021_CVPR} in the ground truth registration. Second, in the blue lines (mean augmented), we show the mean performance in an augmented dataset where each point cloud from each pair has been individually rotated around 9 axes uniformly selected and with 3 different angles around each axis also uniformly selected. Thus, from each pair, we create 54 configurations. Lastly, in the red lines (robust augmented), we show the robust loss i.e., the mean performance for each overlap region of the minimum performance across the 54 different configurations of the same pair. The total mean across all pairs in the dataset for each case is also shown in the figure. 

We observe that there is a big drop in performance in the augmented setting both in the average ($6-7\%$) and in the robust ($23-43\%$) metrics, which is exacerbated as the overlap of the point clouds becomes smaller. This is indicated by the fact that the difference between the lines increases as the overlap decreases in the Registration Recall in Fig. \ref{fig:brokenall}. We also observe that GeoTransformer is more robust to initial poses than the rest of the methods which is attributed to the invariant design of the transformer part that learns to match the superpoints between the point clouds. The reason that the method still performs erratically in different initial poses is that the backbone, KP-Conv \cite{Thomas2019KPConv}, is not rotation equivariant. From this observation, we conclude that baking-in equivariance even in parts of the pipeline can be beneficial for global PCR. A visual example of such inconsistent registration is shown in Figure \ref{fig:brokenvisual} where Geotransformer is able to correctly register a pair of point clouds in one configuration but fails to do so in a different configuration. These observations indicate that the problem of \textit{global} PCR remains unsolved and there is a need for a pipeline that performs consistently, irrespective of the poses of the point clouds. On the other hand, our method is designed to consistently register the given scene in all possible configurations of the input pose, since it is bi-equivariant to rigid transformations of the inputs. 

\subsection{Quantitative Comparison} 
\vskip -0.3cm
We  compare the performance of our method against recent state-of-the-art, FCGF~\cite{FCGF2019}, D3Feat~\cite{bai2020d3feat}, SpinNet~\cite{ao2020SpinNet}, Predator~\cite{Huang_2021_CVPR},YOHO ~\cite{wang2022you}, CoFiNet~\cite{yu2021cofinet}, GeoTransformer ~\cite{qin2022geometric}. All methods are trained on the training set of 3DMatch and are evaluated in both 3DMatch and 3DLoMatch. All methods are trained with rotation augmentations for both the source and reference point clouds. In Table \ref{table:RResults} we present the Registration Recall separately for the original 3DMatch and 3DLoMatch. Then, in order to measure robustness to the initial poses of the point clouds, which is the important metric for \textit{global} PCR, we estimate the \textit{expected registration recall} (Mean RR) across different initial poses and the \textit{robust registration recall} which is the average over the dataset of the minimum recall over different poses of the input. To estimate these metrics we create an augmented test dataset where in each pair of point clouds we apply 3d rotation around 9 axes uniformly selected and around 3 angles per axis for both the source and reference point clouds. Thus for each pair, we create 54 configurations and we report the metrics on this augmented dataset. 

We observe that our method achieves comparable results with other state-of-the-art methods in the canonical test set, being second only to GeoTransformer. Moreover, it achieves state-of-the-art performance in the expected and robust metrics. This validates the argument that our bi-equivariant design is an important step towards \textit{global} PCR without sacrificing performance on the canonical setting. Visualizations of low-overlap registrations are provided in Appendix Fig. \ref{fig:qualResult}. 

In Table \ref{table:ablation} we provide an ablation study to show the importance of the proposed bi-equivariant layers as well as the proposed equivariant iterative refinement. First, we provide a simple bi-equivariant alternative to GeoTransformer by replacing the non-equivariant feature extractor KPConv \cite{Thomas2019KPConv} with the equivariant VNN \cite{deng2021vn}. We show that BiEquiFormer, which in addition uses bi-equivariant layers that fuse the information from the two point clouds demonstrates improved performance on the task. Moreover, we experimented with local refinement steps after the initial global alignment. We ran the non-equivariant ICP algorithm, heavily tuned (Point-to-Plane ICP with Robust loss \cite{Pomerleau}). Then we ran the equivariant iterative scheme described in Section \ref{sec:iterative}. In this case too, our method yields better results.

\begin{table}[!t]
\vskip -2cm
\setlength{\tabcolsep}{5pt}
\scriptsize
\centering
\begin{tabular}{l|cc|c|c|c|c}
\toprule
 & \multicolumn{2}{c|}{\textbf{Canonical}} & \multicolumn{4}{c}{\textbf{Roto-translated}} \\
\hline
\multirow{2}{*}{Model} & \multicolumn{2}{c|}{RR} & \multicolumn{1}{c}{Mean RR} &  \multicolumn{1}{c}{Robust RR} & \multicolumn{1}{c}{Mean IR} & \multicolumn{1}{c}{Robust IR}\\
 & 3DM & 3DLM & 3DM+3DLM & 3DM+3DLM & 3DM+3DLM & 3DM+3DLM\\
\midrule
FCGF \cite{FCGF2019} & 0.85 & 0.40 & - & - & - & -\\
D3Feat \cite{bai2020d3feat} & 0.82 & 0.37 & - & - & - & - \\
Predator \cite{Huang_2021_CVPR} & 0.89 & 0.60  & 0.71 & 0.34 & 0.36 & 0.25\\
CoFiNet  \cite{yu2021cofinet}&
0.89 & 0.68 & 0.71 & 0.34 & 0.38 & 0.27 \\
GeoTransformer \cite{qin2022geometric} & \cellcolor{tabthird}0.91 & \cellcolor{tabfirst}0.74 & \cellcolor{tabthird}0.77 &0.61 & \cellcolor{tabfirst}0.49 & 0.46\\
Lepard\cite{lepard2021} & \cellcolor{tabfirst}0.92 & 0.65 & 0.64 & 0.60 & 0.37 & 0.30\\
\hline
SpinNet \cite{ao2020SpinNet} & 0.89 & 0.60 & 0.72 & - & 0.36 & -\\

YOHO \cite{wang2022you}& 0.90 & 0.65 & 0.76 & - & 0.43 & - \\
RIGA \cite{RIGA} & 0.89 & 0.65 & \cellcolor{tabthird}0.77 & \cellcolor{tabthird}0.77 & \cellcolor{tabthird}0.47 & \cellcolor{tabthird}0.47\\
\textbf{BiEquiformer} & 0.90 & \cellcolor{tabthird}0.69 & \cellcolor{tabfirst}0.78 & \cellcolor{tabfirst}0.78 & \cellcolor{tabfirst}0.49 & \cellcolor{tabfirst}0.49\\
\bottomrule
\end{tabular}
\caption{Top: Non-equivariant methods, Bottom: Equivariant methods. Canonical Registration Recall (RR) on 3DMatch (3DM) and 3DLoMatch (3DLM), Mean and Robust Registration Recall (Mean RR, Robust RR) and Inlier Ratio (Mean IR, Robust IR) on the total Rotated 3DMatch (concatenation of the 3DMatch and 3DLoMatch) for inputs augmented by uniform rotation.}
\end{table}

\begin{table}[!t]
\setlength{\tabcolsep}{5pt}
\scriptsize
\centering
\begin{tabular}{l|cc}
\toprule
\multirow{2}{*}{Model} & \multicolumn{2}{c}{RR} \\
 & 3DM & 3DLM  \\
\midrule
VNN+GeoTransformer & 0.87 & 0.62 \\
BiEquiformer + ICP & \cellcolor{tabthird}0.88 & \cellcolor{tabthird}0.66 \\
BiEquiFormer & \cellcolor{tabfirst}0.90 & \cellcolor{tabfirst}0.69 \\
\bottomrule
\end{tabular}
\caption{Ablation study on BiEquiformer. VNN+GeoTransformer replaces the non-equivariant KPConv \cite{Thomas2019KPConv} with an equivariant counterpart VNN \cite{deng2021vn}. BiEquiFormer+ICP utilizes the bi-equivariant layers but refines with a non-bi-equivariant ICP. BiEquiFormer uses the equivariant iterative scheme described in Section \ref{sec:iterative}}\label{table:ablation}
\vspace{-20pt}
\end{table}

\section{Conclusion} 
\vskip -0.3cm
In this work we proposed a novel bi-equivariant pipeline to address the task of \textit{global} PCR i.e. registration without the assumption of a good initial guess of the input point clouds. We investigated the robustness of current deep learning methods on the poses of the input scans and observed a large performance degradation, especially in low-overlap settings. We proposed to address the issue by utilizing equivariant deep learning and formulated and characterized the bi-equivariant properties of PCR. Since standard rotational equivariant layers have large memory overhead but most importantly, they extract features separately from each point cloud, we proposed to build novel, expressive bi-equivariant layers that fuse the information of the two point clouds while extracting per-point features on them. We used those layers to build BiEquiformer a bi-equivariant attention architecture that is scalable to the large volume of points in scene-level scans. We evaluated our method on both the 3DMatch and the challenging 3DLoMatch dataset, showing that our method can achieve comparable and even superior performance to other non-equivariant and equivariant state-of-the-art methods, especially in the robust metrics.

We believe that the explicit formulation and characterization of the bi-equivariance of PCR can be extended to other problems such as pick-and-place tasks in robotic manipulation. We are confident that the bi-equivariant layers that we designed in this work will be beneficial for such tasks too. As a limitation, we pinpoint that while the method achieves state-of-the-art performance in the robust case, there is a small gap in the canonical setting. We believe that this can be attributed to the expressivity of the VNN feature extractor in the first step of the pipeline. However, higher-order steerable feature extractors are currently not scalable to scene-level scans. 

\section*{Acknowledgements}
This project was funded by the grants: ARO MURI W911NF-20-1-0080 and ONR N00014-22-1-2677.

\bibliography{Neurips2024/main}
\bibliographystyle{abbrvnat}

\newpage

\section{Appendix / Supplementary Material}
\subsection{Equivariant Feature Extraction} \label{sec:featExtra}
 Previous works utilize commonly used point cloud processing architectures, such as  KPConv-FPN \citep{Thomas2019KPConv} or DGCNN \citep{dgcnn}, to extract per point features for each point-cloud individually. These features are not inherently designed to be equivariant to rigid transformations.
 We address this limitation by using a backbone feature extractor that outputs both invariant $f_s$ and equivariant $f_v$ feature vectors. Under a roto-translation $R,T$ of the input these features transform as:
 \begin{align}
    f_s(Rx_i+T,RX+T)=f_s(x_i,X), \qquad f_v(Rx_i+T,RX+T)=Rf_v(x_i,X)\label{eq:EqFeat}
\end{align}
To process such equivariant vector features we utilize the Vector Neurons layer proposed in \cite{deng2021vn}. This type of linear layer, denoted as $\mathrm{VN}$, processes features of the form $F\in \mathbb{R}^{3\times C}$, with columns corresponding to vectors in $\mathbb{R}^3$. It is defined as $\mathrm{VN}(F)=FW_{lvn}$, and is equivariant to rotations of its input features since $\mathrm{VN}(RF)=RFW_{lvn}=R\mathrm{VN}(F)$. 

Additionally, to capture the geometry of the scenes at different levels of detail we use a hierarchical architecture, similar to  \citet{chen2022equivariant}, that processes and outputs invariant/equivariant vector features for different subsampled versions of the input point cloud. We denote these subsampled versions as $X_{(0)},X_{(1)},\ldots,X_{(n)}$, ranging from finer to coarser sampled points. We can create the different levels by running for example an equivariant adaptation of Farthest Point Sampling (FPS) where we initialize it from the point closest to the mean. The points in the first downsampling level are referred as dense points $X_D=X_{(1)}$, while the points obtained by the last level of downsampling are referred to as superpoints $X_S=X_{(n)}$. Similarly we use the notation $x_{(i)}$ to distinguish the points for the different subsampling levels.  

\subsection{Implementation Details}\label{sec:impldetails}
\subsubsection{Input pre-processing}
For the initial feature extraction, described in Section \ref{sec:featExtra}, we use four different subsampled versions of the input point cloud, denoted as $X^{(0)},X^{(1)},X^{(2)},X^{(3)}$. Each point cloud is sampled using grid sampling where, for the $i^{th}$ subsampled version $X^{(i)}$, the voxel size is set to $0.025*2^i$.
\\
During training, both the source and the reference point clouds are augmented with Gaussian noise with standard deviation of 0.005.  Additionally, for each point cloud, we limit the total amount of points to 5000. If the input point clouds exceed this limit, we randomly sample 5000 points from each one of them. We observed that enforcing this limit during training has a minimum effect on the performance during testing, even when we test on larger point clouds.
\subsubsection{Model Architecture and Training}
We implemented and evaluated BiEquiFormer in PyTorch \cite{NEURIPS2019_9015} on an I9 Intel CPU, 64GB RAM and an NVIDIA RTX3090 GPU.
\begin{itemize}
    \item \textbf{Feature extraction}: Our feature extraction network consists of consecutive ``hybrid" layers, similar to the ones proposed in \citet{chen2022equivariant}, that simultaneously process both scalar invariant features and equivariant vector features by utilizing Vector Neurons layers \citep{deng2021vn}. In each layer, all points aggregate features from their k nearest neighbors, where we set $k=20$. We perform three aggregation steps for each subsampled version of the point cloud. Similar to KPConv-FPN \citep{Thomas2019KPConv}, we process the different subsampled versions from finer to coarser, where the coarser points have as input features an aggregation of the extracted features of their closest finer points.
    \item \textbf{Coarse point correspondence}: The coarse point correspondence model consists of three consecutive blocks of an intra-point self-attention layer described in Section \ref{sec:intrapoint}, followed by an inter-point cross attention layer that uses only the invariant features of the point clouds, and an equivariant inter-point cross-attention layer described in Section \ref{sec:pairattn}.
    \item \textbf{Fine point matching}: As discussed in Section \ref{sec:fineMatching}, we extract fine point matches between the local neighborhoods of the matched superpoints by using an optimal transport layer. We use the Sinkhorn algorithm \citep{sinkhorn1967concerning} for 100  steps. After extracting the soft assignment between  fine points, we use solve a weighted Procrustes problem, shown in \ref{sec:fineMatching}, to extract the local candidate transformations for the different matched superpoints. Finally, we follow the Local to Global Registration scheme, which selects the candidate transformation that minimizes the total alignment error.
    \item \textbf{Iterative Refinement:}   When we perform the iterative refinement we train an initial model for the first estimation of the alignment transformation and then a second model that performs the refinement steps.
\end{itemize}
 During training we supervise the output of the coarse matching module by using the overlap-aware circle loss proposed in \cite{qin2022geometric}. Additionally, similarly to  \cite{sarlin20superglue} we supervise the fine point matches between the neighborhood $\mathcal{N}_{x_k}$, $\mathcal{N}_{y_k}$ by using a negative log-likelihood loss on the output of the soft assignment matrix $Z_k$ produced by the optimal transport:
\begin{align*}
    \mathcal{L}_{f,k}=-\sum_{(x,y)\in \mathcal{G}_k}\log(z_{x,y})&-\sum_{x\in \mathcal{I}_k}\log(z_{x,m_k+1})\\
    &-\sum_{y\in \mathcal{J}_k}\log(z_{n_i+1,y})
\end{align*}
where $\mathcal{G}_k$ is the set of ground truth fine point matches, $\mathcal{I}_k$, $\mathcal{J}_k$ are the sets containing the rest unmatched points and  $z_{.,m_k+1}$, $z_{n_i+1,.}$ corresponds to the dustbin row and column output from the learnable optimal transport module.
We train our model for 40 epochs, using an initial learning rate of $10^{-4}$ that we reduce by a scale of $0.95$ each epoch. All the parameters are optimized using the Adam optimizer \citep{kingma2015adam}.
\subsection{Local to Global Registration}\label{sec:loc2glob}
The final alignment transformation is computed using a local-to-global registration scheme proposed in \citet{qin2022geometric}. For each candidate coarse match $(x_{k(n)},y_{k(n)})$ and their given set of inliers $M_k$, we compute a candidate transformation $R_i,T_i$ by solving the optimization problem:
\begin{align*}
    \underset{R,T}{\min}\sum_{(p,q)\in M_k}z_{p,q}\lVert Rp+t-q\rVert_2^2
\end{align*}
where $z_{p,q}$ is the entry corresponding to the soft assignment of the fine point $p$ to the point $q$ in the optimal transport matrix $Z_k$. Finally we pick as the global estimated transformation, the candidate that minimizes the alignment error over the combined set of inliers $\bigcup_{k=1,\ldots,M}M_k$.
\subsection{Evaluation Metrics}\label{sec:evalmetr}
 \textbf{Registration Recall (RR)}: the fraction of point clouds whose estimated transformation has an error less by a set threshold. Specifically given a ground truth transformation $P_{gt}$ and the estimated transformation $P_{est}$ we compute the RMSE error:
    \begin{align*}
        \text{RMSE}=\sqrt{\frac{1}{|Y|}\sum_{y\in Y}\lVert P_{gt}^{-1}P_{est}y-y\rVert_2^2}
    \end{align*}
    then the registration recall counts the fraction of registration with RMSE$<0.2$m.
    
    \textbf{Inlier Ratio (IR)} the fraction of fine point correspondences where their residual under the ground-truth transformation is below 0.1m.
    
    \textbf{Relative Rotation and Relative Translation Error}: the relative rotation error and relative translation error between the estimated and ground truth transformation

\subsection{Proofs of Propositions}\label{sec:proofs}
Before beginning with the proofs of the propositions we need to prove a subtle but important point that the joint action is indeed a valid group action of the direct product group.
\begin{proposition}\label{prop:jointaction}
    If the groups $G_1, G_2$ act on the set $S$ via $*,\cdot$ from the right and the left respectively and these actions are \textit{jointly associative} i.e. $(g_1 * s) . g_2 = g_1 * (s . g_2),$ for all $g_1 \in G_1, g_2 \in G_2, s \in S$ then the map defined as:
    \begin{align*}
    &(G_1 \times G_2) \times S \rightarrow S \\
    &((g_1,g_2),s) \mapsto g_1 * (s \cdot g_2^{-1})
    \end{align*}
    is a group action of the direct product group $G_1 \times G_2$.
\end{proposition}
\begin{proof}
    We write the map as $(g_1,g_2)s := g_1 * (s \cdot g_2^{-1})$ for compactness. 
    If $e_1, e_2$ are the identity elements of $G_1,G_2$ then $(e_1,e_2)$ is the identity element of $G_1 \times G_2$. Also consider $(g_1,g_2),(h_1,h_2) \in G_1 \times G_2$ Then,
    \begin{align*}
        1.& (e_1,e_2)s = e_1 * (s \cdot e_2^{-1}) = e_1*(s \cdot e_2) = e_1 * s = s\\
        2.& (g_1,g_2)(h_1,h_2)s = (g_1,g_2)(h_1 * (s \cdot h_2^{-1})) = g_1 * ((h_1 * (s \cdot h_2^{-1})) \cdot g_2^{-1}) \overset{\text{joint assoc.}}{=} \\&= g_1 * (h_1 * ((s \cdot h_2^{-1}) \cdot g_2^{-1})) = g_1 * (h_1 * ((s \cdot h_2^{-1}g_2^{-1}))) = (g_1h_1) * (s \cdot (h_2^{-1}g_2^{-1})) \\& = (g_1h_1,g_2h_2)s 
        \end{align*}
\end{proof}
Due to joint associaticity we can drop the parentheses and write $(g_1,g_2)s := g_1 * s \cdot g_2^{-1}$. We did not do that in the proof to make explicit when the joint associativity was used.

\begin{proof}[Proof of Proposition \ref{prop:pcr_bieq}]
    Given the formulation in Section \ref{sec:PrFormulation} we start by denoting the input point clouds $X^r, Y^s$ and their relative rigid transformation $\mathcal{T}_s^r = \begin{bmatrix}
        R_s^r & T_s^r \\ 0 & 1
    \end{bmatrix}$.
    Also let $C=\{(x_i,y_i)|x_i\in X^r,y_i\in Y^s\}$ denote the point matches. Now, if the input point clouds transform with $\mathcal{T}_1,\mathcal{T}_2 \in \mathrm{SE(3)}$ as: $\mathcal{T}_1 X^r = R_1 X^r + T_1$, $\mathcal{T}_2 Y^s = R_2 Y^s + T_2$ then we need to prove the following for the transformation $\mathcal{T}_1\mathcal{T}_s^r\mathcal{T}_2^{-1} \in \mathrm{SE(3)}$:
    \begin{itemize}
        \item Invariant point matching: The points $\mathcal{T}_1 x_i = R_1x_i+T_1 \in \mathcal{T}_1 X^r,\mathcal{T}_2 y_i = R_2y_i+T_2 \in \mathcal{T}_2 Y^s$ are also point matches for $\mathcal{T}_1\mathcal{T}_s^r\mathcal{T}_2^{-1}$ (which can also be computed from the first problem formulation) since in the new alignment we have: 
        $\mathcal{T}_1\mathcal{T}_s^r\mathcal{T}_2^{-1} (\mathcal{T}_2 y_i) = \mathcal{T}_1\mathcal{T}_s^r y_i$ and
        \begin{align*}
        \|\mathcal{T}_1 x_i - \mathcal{T}_1\mathcal{T}_s^r y_i\|_2 &= \|({R}_1 x_i +T_1)- ({R}_1(\mathcal{T}_s^r y_i) + T_1)\|_2 \\
        &= \|{R}_1(x_i- \mathcal{T}_s^r y_i)\|_2 = 
        \|x_i- \mathcal{T}_s^r y_i\|_2 \leq \epsilon 
        \end{align*}
        since $(x_i,y_i) \in C$.
        \item Optimal Procrustes: For the initial problem we know that the objective function $L_1(\mathcal{T}) =\sum_{(x_i,y_i) \in C}\|\mathcal{T}y_i - x_i\|_2^2$ satisfies: $L_1(\mathcal{T}_s^r):= L_1^* \leq L_1(\mathcal{T})$ for all $\mathcal{T} \in \mathrm{SE(3)}$. Now we look at the objective of the new problem (for which we proved invariant matches) $L_2(\mathcal{T}) = \sum_{(x_i,y_i) \in C}\|\mathcal{T} \mathcal{T}_2y_i - \mathcal{T}_1x_i\|_2^2$. If we substitute $\mathcal{T}=\mathcal{T}_1\mathcal{T}_s^r \mathcal{T}_2^{-1}$ we get:
        $$L_2(\mathcal{T}_1\mathcal{T}_s^r \mathcal{T}_2^{-1}) = \sum_{(x_i,y_i) \in C}\|\mathcal{T}_1\mathcal{T}_s^r \mathcal{T}_2^{-1} \mathcal{T}_2y_i - \mathcal{T}_1x_i\|_2^2 = \sum_{(x_i,y_i) \in C}\|\mathcal{T}_s^r y_i - x_i\|_2^2 = L_1(\mathcal{T}_s^r)=L_1^*$$
        we proved that the optimal of the second problem is upper bounded by the first. We will also show the opposite. In particular, if we substitute $\mathcal{T}=\mathcal{T}_1^{-1}\mathcal{T}\mathcal{T}_2$  in $L_1$ for any $\mathcal{T} \in \mathrm{SE(3)}$ we get:
        \begin{align*}
            L_1(\mathcal{T}_1^{-1}\mathcal{T}\mathcal{T}_2) &= \sum_{(x_i,y_i) \in C}\|\mathcal{T}_1^{-1}\mathcal{T}\mathcal{T}_2y_i - x_i\|_2^2 \\ &= \sum_{(x_i,y_i) \in C}\|R_1^T(\mathcal{T}\mathcal{T}_2y_i)-R_1^T T_1 - x_i\|_2^2 \\&= \sum_{(x_i,y_i) \in C}\|\mathcal{T}\mathcal{T}_2y_i-R_1(R_1^T T_1 + x_i)\|_2^2 \\&=  \sum_{(x_i,y_i) \in C}\|\mathcal{T}\mathcal{T}_2y_i-\mathcal{T}_1 x_i\|_2^2 = L_2(\mathcal{T})
        \end{align*}
    \end{itemize}
\end{proof}
\begin{proof}[Proof of proposition \ref{prop:eq_flip}]
First, we can again prove invariant matching. The flip is a unitary operation so it does not change the distances between the matched points. In other words since $\|x_m-y_m\|_2=\|y_m-x_m\|_2$ the set $C$ of point matches consists of the same points (reversed). Again looking at the two objectives we can prove Procrustes optimality as for $\mathcal{T} \in \mathrm{SE(3)}$ it holds $\mathcal{T}^{-1} \in \mathrm{SE(3)}$:
\begin{align*}
    L_1(\mathcal{T}^{-1}) &= \sum_{(x_i,y_i)\in C}\|\mathcal{T}^{-1}y_i-x_i\|_2^2 \\ &= \sum_{(x_i,y_i)\in C}\|R^Ty_i-R^TT-x_i\|_2^2 \\ &= \sum_{(x_i,y_i)\in C}\|y_i-T-Rx_i\|_2^2  = \sum_{(x_i,y_i)\in C}\|\mathcal{T}x_i-y_i\|_2^2 = L_2(\mathcal{T})
\end{align*}
Thus, the optimal values of the two problems are again there same and since $\mathcal{T}_s^r$ is optimal for $L_1$ then $(\mathcal{T}_s^r)^{-1}$ is optimal for $L_2$.
Lastly, this is indeed an action of the flips since $f^2=e$ and $((\mathcal{T}_s^r)^{-1})^{-1} = \mathcal{T}_s^r$
\end{proof}
\begin{proof}[proof of Proposition \ref{prop:eq_ord}]
Since the permutations is a unitary transformation the distance again as above do not change and the matching is again invariant (this time the set has exactly the same points in some order). Since the sum is order-invariant the value of the objective is also the same so the problem is invariant to point permutations.   
\end{proof}
\begin{proof}[proof of proposition \ref{prop:bifeats}]
    \begin{enumerate}
        \item Since $f_1 \mapsto R_1 f_1, f_2 \mapsto R_2 f_2$ the tensor product $f_1f_2^T \mapsto (R_1f_1)(R_2f_2)^T = R_1(f_1f_2^T)R_2^T$. Thus, the map is output bi-equivariant.
        \item Since all singular values are distinct and positive, we can sort them in $\Sigma = diag\{\sigma_1,\sigma_2,\sigma_3\}$ in which case it is known that the SVD of $F=U\Sigma V^T$ is unique up to a simultaneous sign flip of the columns of $U,V$ i.e., there are 8 choices for $U= [\pm u_1 ~ \pm u_2 \pm u_3]$ and the corresponding for $V$. However, if $\det(F)>0$ then we can select both $U,V \in SO(3)$ i.e. $u_3=u_1 \times u_2$ and $v_3 = v_1 \times v_2$ and if $\det(F) <0$ we can select $U \in O(3)-SO(3), V \in SO(3)$ i.e. $u_3 = - u_1 \times u_2, v_3=v_1 \times v_2$. Since all singular values are positive the determinant cannot be zero. 
        
        That leaves 4 choices i.e. if $(u_1,v_1)$ and $(u_2,v_2)$ are the first and second columns of $U,V$ then $\pm(u_1,v_1)$, $\pm (u_2,v_2)$ are the rest of the choices for the first and second column of $U,V$ which create the valid SVD solutions.
        
        Now,  $R_1F R_2^T = R_1(U\Sigma V^T)R_2^T = (R_1U) \Sigma (R_2V)^T$ and thus $(R_1U, \Sigma, R_2V)$ is an SVD of $R_1F R_2^T$ since the composition of rotation matrix with a unitary matrix is a unitary matrix. Moreover, $\det(R_1FR_2^T) = \det(R_1)\det(F)\det(R_2)=\det(F)$ so if $\det(F)>0$ then $R_1U, R_2V \in SO(3)$ and if $\det(F)<0$ then $R_1U \in O(3)-SO(3), R_2V \in SO(3)$ as is the case for $U,V$.

        So if the set $\{(U_1,V_1),(U_2,V_2),(U_3,V_3),(U_4,V_4)\}$ is the set of valid $U,V$ in the SVD for $F$ then for $R_1FR_2^T$ the corresponding set is: $\{(R_1U_1,R_2V_1),(R_1U_2,R_2V_2),(R_1U_3,R_2V_3),(R_1U_4,R_2V_4)\}$. Also, $\Sigma$ is invariant. Thus we can use any point-wise non-linearity on $\Sigma$ since this is also invariant. And if we define the action $\cdot$ on the set of 4 matrices as $R\cdot\{U_1,U_2,U_3,U_4\} = \{RU_1,RU_2,RU_3,RU_4\}$ then the map: $$F \mapsto (\{U_i\sigma(\Sigma)\}_{i=1}^4,\{V_i\sigma'(\Sigma)\}_{i=1}^4)$$ satisfies: $$R_1FR_2^T \mapsto (R_1 \cdot \{U_i\sigma(\Sigma)\}_{i=1}^4,R_2 \cdot \{V_i\sigma'(\Sigma)\}_{i=1}^4).$$ Thus the map is input bi-equivariant. 
        \item Since $\|R_1 F R_2^T\| = \|F\|$ we get $R_1FR_2^T \mapsto \sigma(\|R_1FR_2^T\|) \frac{R_1FR_2^T}{\|R_1FR_2^T\|} = R_1 \sigma(\|F\|)\frac{F}{\|F\|}R_2^T$. Thus the map is input-output bi-equivariant. 
    \end{enumerate}
\end{proof}
\begin{proposition} \label{prop:intra_inv}
$\alpha_s^\mathrm{intra}$ is invariant and $\alpha_v^\mathrm{intra}$ is equivariant to the roto-translation of the input point cloud (see proof in Appendix):
\begin{align*}
    \alpha_s^\mathrm{intra}(Rx_i+T,f_s,Rf_v)&=\alpha_s^\mathrm{intra}(x_i,f_s,f_v)\\
     \alpha_v^\mathrm{intra}(Rx_i+T,f_s,Rf_v)&=R\alpha_v^\mathrm{intra}(x_i,f_s,f_v)
\end{align*}
\end{proposition}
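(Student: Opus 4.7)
The plan is to reduce the whole statement to two facts: that every scalar ingredient of the score matrix $e_{ij}$ is invariant under a roto-translation of the point cloud, and that the Vector Neurons layer $\mathrm{VN}_V$ is $\mathrm{SO}(3)$-equivariant. Once these are in place the two claims follow from linearity of the sum over $x_j$.

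First I would verify that $e_{ij}$ is invariant. The input features transform as $f_s \mapsto f_s$ (invariant by construction of the backbone described in Section \ref{sec:featExtra}) and $f_v \mapsto R f_v$, while $r_{ij}$ is invariant because it is the geometric embedding of \cite{qin2022geometric}, which depends only on pairwise distances and angles. The first term of $e_{ij}$ involves only $f_s$ and $r_{ij}$ and is therefore trivially unchanged. For the second term the rotation cancels through an $R^T R = I$:
\begin{equation*}
w_q\,(Rf_v(x_i))^T (R f_v(x_j))\, w_k^T \;=\; w_q\, f_v(x_i)^T R^T R\, f_v(x_j)\, w_k^T \;=\; w_q\, f_v(x_i)^T f_v(x_j)\, w_k^T.
\end{equation*}
Hence $e_{ij}$ is invariant, and the softmax weights $s_{ij}$ are invariant as well since the softmax is a function of the invariant scores only.

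For $\alpha_s^\mathrm{intra}$, the summand $s_{ij} W_v f_s(x_j)$ is a product of an invariant scalar and an invariant vector (no $f_v$ appears), so the whole sum is invariant, proving the first identity. For $\alpha_v^\mathrm{intra}$, I use that $\mathrm{VN}_V$ is defined as right-multiplication by a learned weight matrix on the $C$-index, which commutes with left-multiplication by $R$ on the spatial index: $\mathrm{VN}_V(R f_v(x_j)) = R f_v(x_j) W_{lvn} = R\, \mathrm{VN}_V(f_v(x_j))$. Combining this with the invariance of $s_{ij}$ and pulling $R$ out of the sum gives
\begin{equation*}
\alpha_v^\mathrm{intra}(Rx_i+T,f_s,R f_v) \;=\; \sum_{x_j\in X} s_{ij}\, R\,\mathrm{VN}_V(f_v(x_j)) \;=\; R\, \alpha_v^\mathrm{intra}(x_i,f_s,f_v),
\end{equation*}
which is the second identity.

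I do not expect any significant obstacle; the only subtle point is that translation $T$ plays no role at all because $f_s$, $f_v$, and $r_{ij}$ are all translation-invariant by design of the backbone and of the geometric embedding, so the proof only needs to track the rotation $R$. The whole argument rests on (i) the explicit cancellation $R^T R = I$ inside the quadratic form on $f_v$, and (ii) linearity of the Vector Neurons layer in the spatial axis; both are immediate from the definitions.
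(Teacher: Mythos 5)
Your proposal is correct and follows essentially the same route as the paper's proof: establish invariance of the attention scores $e_{ij}$ (first term via invariance of $f_s$ and $r_{ij}$, second term via the cancellation $R^TR=I$), then conclude invariance of $\alpha_s^{\mathrm{intra}}$ and pull $R$ through the equivariant $\mathrm{VN}_V$ layer for $\alpha_v^{\mathrm{intra}}$. Your added remarks on why $\mathrm{VN}_V$ commutes with $R$ and why translation plays no role are just slightly more explicit versions of what the paper leaves implicit.
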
 
\begin{proof}[Proof sketch of proposition \ref{prop:intra_inv}]
It is easy to show that $e_{ij}$ is invariant to transformations of all the inputs of $a_s^\mathrm{intra}$ since the first term  uses only the invariant $f_s$ features and the invariant $r_{ij}$ geometric embedding introduced in \cite{qin2022geometric}. In the second term a transformation by $R$ results in:
\begin{align*}
    w_q(Rf_v(x_i))^T(Rf_v(x_j))w_k=w_qf_v(x_i)^TR^TRf_v(x_j)w_k=w_qf_v(x_i)^Tf_v(x_j)w_k
\end{align*}
which is also invariant.
As a result $\alpha_s^\mathrm{intra}(x_i,f_s,f_v)$ is invariant since it only depends on $e_{ij}$ and $f_v$ and 
since the $\mathrm{VN}$ layer is equivariant to the rotations:
\begin{align*}
    \alpha_v^\mathrm{intra}(Rx_i+T,f_s,Rf_v)=&=\sum_{x_j\in X}\frac{\exp(e_{ij})}{\sum_{x_j'\in X}\exp(e_{ij'})}\mathrm{VN}_V(Rf_v(x_j))\\
    &=\sum_{x_j\in X}\frac{\exp(e_{ij})}{\sum_{x_j'\in X}\exp(e_{ij'})}R\mathrm{VN}_V(f_v(x_j))\\
    &=R\alpha_v^\mathrm{intra}(x_i,f_s,f_v)
\end{align*}
\end{proof}
\begin{proof}[Proof Sketch of proposition \ref{prop:align}]
Here we use the fact that the the Frobenius norm is invariant to the rotation as a results for the nonlinearity we have that:
\begin{align*}
    \phi(R_xFR_y^T)&=\mathrm{LayerN}\left(\lVert R_xFR_y^T \rVert\right)\frac{R_xFR_y^T}{\lVert R_xFR_y^T\rVert}\\
    &=\mathrm{LayerN}\left(\lVert F \rVert\right)\frac{R_xFR_y^T}{\lVert F\rVert}\\
    &=R_x\phi(F)R_y^T
\end{align*}
Then using the fact tensor product is bi-equivariant it is easy to show that:
\begin{align*}
    b(R_xf_v(x_i),R_yf_v(y_{pi}))&=\phi(R_xf_v(x_i)\otimes R_yf_v(y_{pi}))\\
    &=\phi(R_x(f_v(x_i)\otimes f_v(y_{pi}))R_Y^T)\\
    &=R_x\phi((f_v(x_i)\otimes f_v(y_{pi})))R_Y^T
\end{align*}
and
\begin{align*}
   a(R_xf_v(x_i),R_yf_v(y_{pi}))&= b(R_xf_v(x_i),R_yf_v(y_{pi}))R_yf_v(y_{pi})\\
   &=R_xb(f_v(x_i),f_v(y_{pi}))f_v(y_{pi})\\
   &=R_xa(f_v(x_i),f_v(y_{pi}))
\end{align*}
\end{proof}

\begin{proposition}\label{prop:fini}
    $\alpha_s^\mathrm{pair}$ is invariant to the roto-translation of both point clouds $X,Y$. $\alpha_v^\mathrm{pair}$ is equivariant to the roto-translation of $X$ and invariant to the roto-translation of $Y$ (see proof sketch in Appendix). Specifically given $X'=R_xX+T_x$ and $Y'=R_yY+T_Y$ :
    \begin{align*}
    \alpha_s^\mathrm{pair}(R_xx_i+T_x,F_{X'},F_{Y'})&= \alpha_s^\mathrm{pair}(x_i,F_X,F_Y)       \\
    \alpha_s^\mathrm{pair}(R_xx_i+T_x,F_{X'},F_{Y'})&= R_X\alpha_s^\mathrm{pair}(x_i,F_X,F_Y)
    \end{align*}

\end{proposition}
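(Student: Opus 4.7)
The plan is to reduce the proposition to repeated applications of the alignment-layer equivariance already proved in Proposition~\ref{prop:align}, together with the backbone equivariance $f_s(Rx+T,RX+T)=f_s(x,X)$ and $f_v(Rx+T,RX+T)=Rf_v(x,X)$ from Eq.~\ref{eq:EqFeat}. The translations $T_x,T_y$ will drop out automatically because the feature extractor outputs translation-invariant scalars and translation-invariant (only rotationally equivariant) vectors; I only need to track the rotations $R_x,R_y$ carefully.

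First I would show that the soft assignment $s_{ij}$ used to construct the paired point $y_{pi}=\sum_j s_{ij}y_j$ is invariant under the joint action. By assumption the $s_{ij}$ come from a cross-attention on the invariant features $f_s$, so they are unchanged when $X\mapsto R_xX+T_x$ and $Y\mapsto R_yY+T_y$. From this the pooled features transform cleanly: $f_s(y_{pi})$ remains invariant and $f_v(y_{pi})\mapsto R_y f_v(y_{pi})$, because pooling commutes with the linear action of $R_y$ and with the identity action on scalars.

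Next I would verify that the attention score $e_{XY}^{\mathrm{pair}}(ij)$ is invariant under the joint action. The first summand $(f_s(x_i)W_Q)(f_s(y_{pj})W_K)^T$ involves only invariant inputs, so is manifestly invariant. For the second summand, apply Proposition~\ref{prop:align}: under $(R_x,R_y)$ the alignment yields $a(R_xf_v(x_i),R_yf_v(y_{pj}))=R_x\,a(f_v(x_i),f_v(y_{pj}))$, so
\begin{align*}
w_q(R_xf_v(x_i))^T\,a(R_xf_v(x_i),R_yf_v(y_{pj}))\,w_k^T
&= w_qf_v(x_i)^T R_x^TR_x\,a(f_v(x_i),f_v(y_{pj}))\,w_k^T \\
&= w_qf_v(x_i)^T a(f_v(x_i),f_v(y_{pj}))\,w_k^T ,
\end{align*}
which is invariant. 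Hence the softmax weights $s_{XY}^{\mathrm{pair}}(ij)$ are invariant as well.

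Finally I would combine these facts. For $\alpha_s^{\mathrm{pair}}$, both the weights and the values $W_vf_s(y_{pj})$ are invariant, so the weighted sum is invariant. For $\alpha_v^{\mathrm{pair}}$, the values are $\mathrm{VN}_V\!\bigl(a(f_v(x_j),f_v(y_{pj}))\bigr)$; by Proposition~\ref{prop:align} each aligned vector picks up exactly a left factor $R_x$ (and no $R_y$), and by linearity $\mathrm{VN}_V$ commutes with this rotation, so every value transforms as $v\mapsto R_x v$. Multiplying by the invariant weights and summing pulls $R_x$ out of the sum, giving equivariance in $X$ and invariance in $Y$. The main obstacle is really bookkeeping: one must ensure that the initial cross-attention defining $s_{ij}$ truly uses only invariant inputs (so its output is invariant under both rotations), since any leakage of equivariant features there would destroy the clean factorization; once that assumption is made explicit, the rest of the argument is an almost mechanical application of Proposition~\ref{prop:align}.
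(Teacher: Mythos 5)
Your proof is correct and follows essentially the same route as the paper's own (very terse) sketch: reduce everything to the invariance of the attention scores plus Proposition~\ref{prop:align} for the aligned values, exactly as in the argument for $\alpha^{\mathrm{intra}}$. You actually supply more detail than the paper does --- in particular the check that $y_{pi}$'s pooled features transform correctly and the explicit caveat that the soft assignment $s_{ij}$ must come from invariant features only (which the paper does assume) --- so nothing is missing.
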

\begin{proof}[Proof sketch of proposition \ref{prop:fini}]
    Here the layer is similar with the one in proposition 4.1 with different second input being $a(f_v(x_i),f_v(y_{pi}))$ that is equivariant to the transformation of frame $X$. So we can show the equivariance using the same arguments as proposition 4.1
\end{proof}
\subsection{Qualitative Results}
\begin{figure*}
\centering
\includegraphics[width=\linewidth]{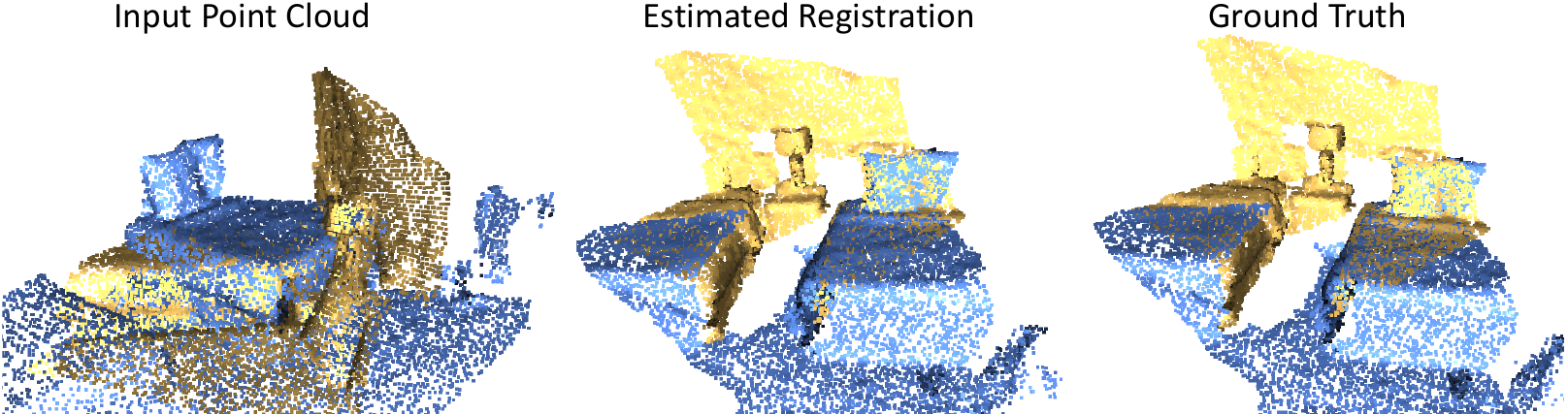}

\includegraphics[width=\linewidth]{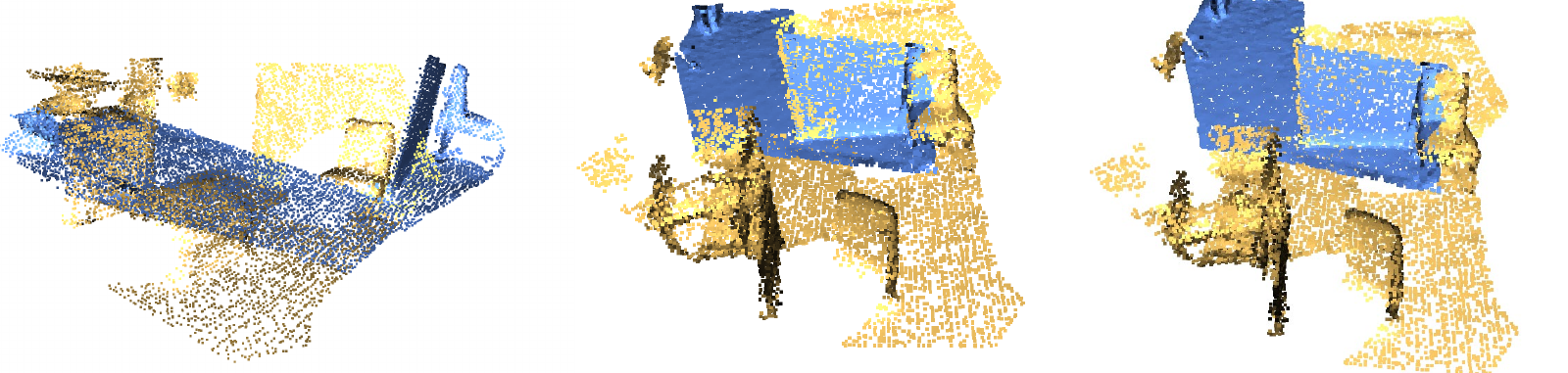}

\includegraphics[width=\linewidth]{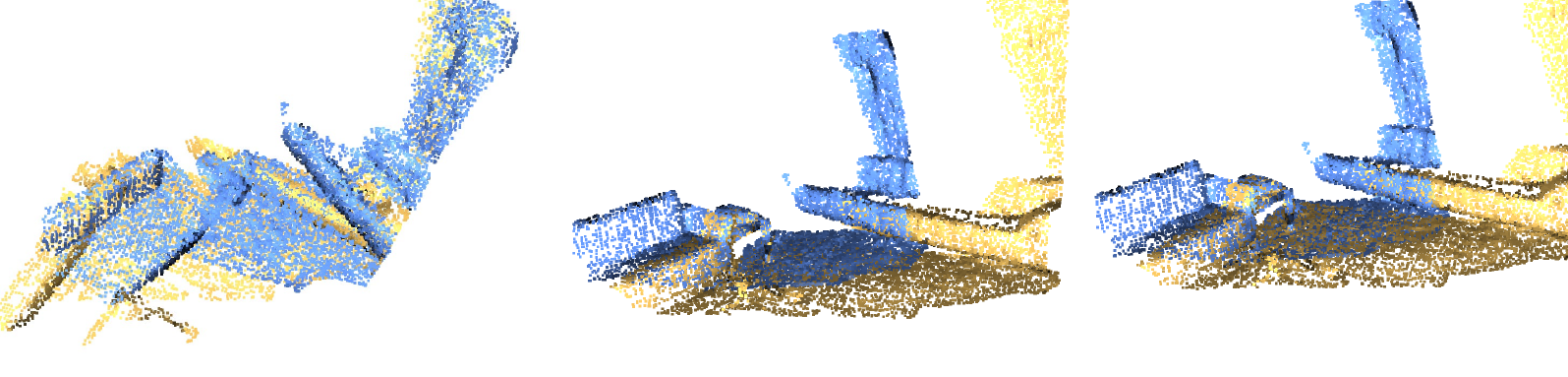}

\includegraphics[width=\linewidth]{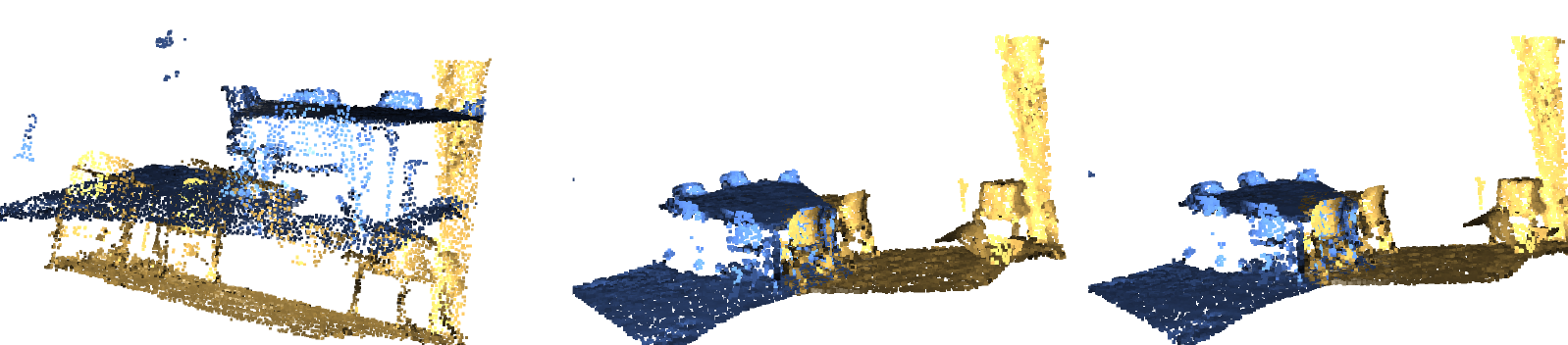}
%

\caption{Registration results achieved by our method compared to the ground truth alignment.}
\label{fig:qualResult}
\end{figure*}
In Figure \ref{fig:qualResult} we provide additional qualitative results with registrations achieved by our method. We show examples of both high and low overlap from the test set of 3DMatch and 3DLoMatch. 
\subsection{Limitations}\label{sec:limitations}
One limitation of the current network is that, while in the robust setting, it achieves state-of-the-art results, in the canonical setting there is a performance gap with the current best methods. We conjecture that this can be attributed to the feature extraction backbone VNN \cite{deng2021vn} and we will investigate alternatives in the future.

Another limitation of the pipeline is an additional memory overhead coming from the tensor products in the attention modules. While we did our best to create a scalable and compact architecture, the toll to satisfy the equivariance constraint exactly is that some blocks might require additional operations to their non-equivariant counterparts. While in the 3DMatch setting, this did not make a difference, the method has to be adapted properly in order to register scenes with millions of points. 

A general limitation of correspondence-based methods like ours is that when the overlap is zero as in Point Cloud Assembly tasks the network cannot treat PCR properly. Moreover, as typical in PCR literature, it is implicitly assumed that there is a correct alignment for the input pairs. The network is designed to predict the best alignment possible even when no alignment is correct. Thus in order to integrate it into bigger SLAM pipelines for loop closure detection etc. additional extensions need to be done.  

Lastly, the case of symmetric parts where multiple alignments are possible is not treated in this work. However, we conjecture that the advantages of our method in  equivariant feature extraction from the neighboorhoods together with the local robust estimators (LGR) that propose different rotations per-neighboorhood before selecting a single one can lead to multiple consistent hypotheses in the cases of symmetric objects.

\subsection{Broader Impact}\label{sec:broader}
In this work, we address a major robustness limitation of current deep learning methods on point cloud registration. Our theoretical and methodological contributions, for example the novel bi-equivariant layers presented, have the potential to advance any pipeline that respects similar symmetries (for example pick-and-place in robotics manipulation). 

Moreover, Point Cloud Registration can be used as the front end of larger SLAM pipelines. Our method guarantees that the registration will be consistent w.r.t. the scan poses meaning that there is no adversarial pose that would make the network behave erratically. If PCR is integrated into safety-critical applications this is a major advancement on verifiable safety. 

~\newpage

\appendix

\end{document}